\newtheorem{theorem}{Theorem}[section]
\newtheorem{lemma}[theorem]{Lemma}
\newtheorem{claim}[theorem]{Claim}
\newtheorem{definition}[theorem]{Definition}
\newcommand{\algorithmname}[1]{\textnormal{\textsc{#1}}}
\newcommand{\leaf}{{\operatorname{leaf}}}
\newcommand{\Path}{{\operatorname{path}}}
\newcommand{\Top}{{\operatorname{top}}}
\newcommand{\Rand}{{\operatorname{rand}}}
\newcommand{\T}{{T}}
\newcommand{\parent}{{\operatorname{parent}}}
\newcommand{\rottext}[1]{\begin{sideways}#1\end{sideways}}
\newcommand{\sign}{{\operatorname{sign}}}
\newcommand{\Left}{{\operatorname{left}}}
\newcommand{\Right}{{\operatorname{right}}}
\newcommand{\Root}{{\operatorname{root}}}
\newcommand{\mem}{{\operatorname{mem}}}
\newcommand{\update}{{\operatorname{update}}}
\newcommand{\checkY}{\checkmark}
\newcommand{\checkN}{}
\newenvironment{proof}{\paragraph{Proof:}}{\hfill$\square$}
\newcommand{\DCMT}{CMT\xspace}
\newsavebox{\algleft}
\newsavebox{\algright}
\newcommand{\wen}[1]{#1}
\renewcommand{\sectionautorefname}{\S\kern-0.2em}      
\renewcommand{\subsectionautorefname}{\S\kern-0.2em}   
\renewcommand{\subsubsectionautorefname}{\S\kern-0.2em}
\icmltitlerunning{Contextual Memory Tree}
\begin{document}

\twocolumn[
\icmltitle{Contextual Memory Trees}




\begin{icmlauthorlist}
\icmlauthor{Wen Sun}{to}
\icmlauthor{Alina Beygelzimer}{goo}
\icmlauthor{Hal Daum\'e III}{ed}
\icmlauthor{John Langford}{ed}
\icmlauthor{Paul Mineiro}{edr}
\end{icmlauthorlist}

\icmlaffiliation{to}{Robotics Institute, Carnegie Mellon University, USA}
\icmlaffiliation{goo}{Yahoo! Research, New York, NY, USA}
\icmlaffiliation{ed}{Microsoft Research, New York, NY, USA}
\icmlaffiliation{edr}{Microsoft, USA}

\icmlcorrespondingauthor{Wen Sun}{wensun@cs.cmu.edu}

\icmlkeywords{Machine Learning, ICML}

\vskip 0.3in
]



\printAffiliationsAndNotice{}  

\begin{abstract}
We design and study a Contextual Memory Tree (\DCMT),
a learning \emph{memory controller} that inserts new memories into
an \emph{experience store} of unbounded size.
It is designed to efficiently query
for \emph{memories} from that store, supporting logarithmic time insertion and retrieval operations. Hence \DCMT can be integrated into existing statistical learning algorithms as an augmented memory unit without substantially increasing training and inference computation.  Furthermore \DCMT operates as a reduction to classification, allowing it to benefit from advances in representation or architecture.  We demonstrate the efficacy of \DCMT by augmenting existing multi-class and multi-label classification algorithms with \DCMT and observe statistical improvement. We also test \DCMT learning on several image-captioning tasks to demonstrate that it performs computationally better than a simple nearest neighbors memory system while benefitting from reward learning.\end{abstract}

\section{Introduction}
\label{sec:intro}


When a human makes a decision or answers a question, they are able to
do so while very quickly drawing upon a lifetime of remembered
experiences.  This ability to retrieve relevant experiences
efficiently from a memory store is currently lacking in most machine
learning systems (\autoref{sec:existing}).  We consider the problem of
learning an efficient online data structure for use as an external memory in
a reward-driven environment.  The key functionality of the Contextual
Memory Tree (CMT) data structure defined here is the ability to
\emph{insert} new memories into a learned key-value store, and to be
able to \emph{query} those memories in the future.  The storage and
query functionality in \DCMT is driven by an optional,
user-specified, external reward signal; it organizes memories so as to
maximize the downstream reward of queries.  In order to scale to very
large memories, our approach organizes memories in a tree structure,
guaranteeing logarithmic time (in the number of memories) operations
throughout (\autoref{sec:properties}).  Because \DCMT operates as
a reduction to classification, it does not prescribe a representation
for the keys and can leverage future advances in classification
techniques.

More formally, we define the data structure \DCMT (\autoref{sec:alg}),
which converts the problem of mapping \emph{queries} (keys) to
\emph{memories} (key-value pairs) into a collection of classification
problems.  
Experimentally (\autoref{sec:experiment}), we show this is useful
in three different settings.  \textbf{(a)} Few-shot learning in
extreme multiclass classification problems, where \DCMT is
used directly as a classifier (the queries are examples and the
values are class labels). \autoref{fig:few_shots_results} shows
that \emph{unsupervised} \DCMT{} can statistically outperform
other \emph{supervised} logarithmic-time baselines including
LOMTree \cite{choromanska2015logarithmic} and Recall Tree (RT)
\cite{daume2016logarithmic} with supervision providing further
improvement. \textbf{(b)} Extreme multi-label classification problems
where \DCMT is used to augment a One-Against-All (OAA) style inference
algorithm.  \textbf{(c)} Retrieval of images based on captions, where
\DCMT is used similarly to a nearest-neighbor retrieval system
(the queries are captions and the values are the corresponding
images).  External memories that persist across examples are also
potentially useful as inputs to downstream applications; for instance,
in natural language dialog tasks \cite{bartl17searchdialogue} and in
machine translation \cite{gu18searchnmt}, it can be useful to
retrieve similar past contexts (dialogs or documents) and augment the
input to the downstream system with these retrieved examples.
Memory-based systems can also be useful as a component of learned
reasoning systems \cite{Weston,DNC}.

\begin{figure*}[t]

\centering
\begin{subfigure}[l]{0.5\textwidth}
  \footnotesize
  \rowcolors{2}{gray!25}{white}
  \begin{tabular}{lccccc}
  & \rottext{Low Time} & \rottext{Small Space} & \rottext{Self-consistent} & \rottext{Incremental} & \rottext{Learning} \\
  \midrule
  Inverted Index  & \checkN & \checkY & \checkY & \checkN & \checkN \\
  Supervised Learning  & \checkY & \checkY & \checkN & \checkY & \checkY \\
  Nearest Neighbor  & \checkN & \checkY & \checkY & \checkY & \checkN \\
  Approx-NN  & \checkY & \checkY & \checkY & \checkN & \checkN \\
  Learned-NN  & \checkN & \checkY & \checkY & \checkN & \checkY \\
  Hashing  & \checkN & \checkY & \checkY & \checkN & \checkY \\
  Differentiable Memory  & \checkN & \checkY & \checkY & \checkY & \checkY \\
  \textbf{CMT}  & \checkY & \checkY & \checkY & \checkY & \checkY \\
\end{tabular}
\end{subfigure}\hfill
\begin{subfigure}[l]{0.5\textwidth}
  \includegraphics[width=\textwidth,keepaspectratio,trim={18 35 85 45},clip]{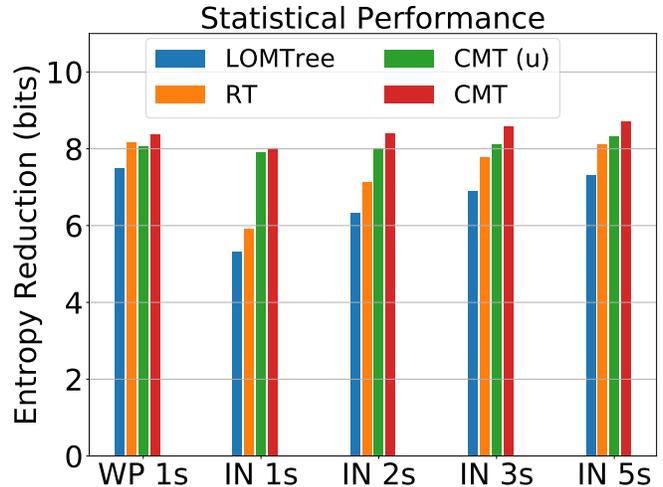}
\end{subfigure}
\caption{(Left) Desiderata satisfied by prior approaches; where answers vary with choices, we default towards 'yes'.~\label{table:alts}
(Right) Statistical performance (Entropy Reduction from the constant predictor.  Higher is better---see the experiments section.) on WikiPara one-shot  (WP 1s) dataset and ImageNet $S$-shot datasets (IN $S$s) \label{fig:few_shots_results} with baselines (LOMTree and RecallTree) and our proposed approach (unsupervised and supervised) \DCMT.}
\vspace{-10pt}
\end{figure*}

A memory $z = (x,\omega)$ is a pair of query $x$ and value $\omega$.  \DCMT operates in the following generic online manner, repeated over time:
\begin{enumerate}[nolistsep,noitemsep,leftmargin=2em]
\item Given a query $x$, retrieve $k$ associated memories $(u, \langle z_{1},z_{2},\dots, z_k\rangle) = \textsc{Query}(x)$ together with an identifier $u$.
\item If a reward $r_i$ for $z_i$ is observed, update the system via $\textsc{Update}( (x, z_i, r_i), u)$.
\item If a value $\omega$ associated with $x$ is available, $\textsc{Insert}$ a new memory $z = (x,  \omega)$ into the system.
\end{enumerate}

A natural goal in such a system is a notion of self-consistency.
If the system inserts $z = (x, \omega)$ into \DCMT,
then in subsequent rounds, one should expect that $(x,\omega)$ is retrieved when $\textsc{Query}(x)$ is issued again for the same $x$.  (For simplicity, we assume that all $x$ are unique.)
In order to achieve such self-consistency in a data structure that changes over time, we augment \DCMT with a ``Reroute'' operation, in which the data structure gradually reorganizes itself by removing old memories and re-inserting them on an amortized basis.
We find that this Reroute operation is essential to good empirical performance (\autoref{sec:ablation}).

\subsection{Existing Approaches}
\label{sec:existing}

The most standard associative memory system is a map data structure (e.g., hashmap, binary tree, relational database);
unfortunately, these do not \emph{generalize} across inputs---either an input is found exactly or it is not.
We are interested in memories that can {generalize} beyond exact lookups, and can \emph{learn} to do so based on past successes and failures
in an \emph{incremental}, online manner.
Because we wish to scale, the \emph{computation time} for all operations must be at most logarithmic in the number of memories,
with \emph{constant space overhead} per key-value pair.
Finally, as mentioned above, such a system should be \emph{self-consistent}.

There are many existing approaches beyond hashmaps, all of which miss
one of our desiderata (\autoref{table:alts}).  A basic approach for text
documents is an \textbf{inverted index}~\cite{Knuth,WAND}, which indexes
a document by the words that appear therein.  On the other end of the
spectrum, \textbf{supervised learning} can be viewed as remembering
(compiling) a large amount of experience into a predictor which may
offer very fast evaluation, but generally cannot explicitly query for
past memories (aka examples).

There has been substantial recent interest in coupling neural
networks with nearest neighbor variants.  Classical approaches are
inadequate: \begin{enumerate*}[label={\alph*)},font={\bfseries}]
\item \textbf{Exact nearest neighbor} algorithms (including memory
  systems that use them \cite{KaiserNRB17}) are computationally
  inefficient except in special cases \cite{RPTrees,CoverTree} and do
  not learn.  \item \textbf{Approximate Nearest Neighbors} via
  Locality-Sensitive Hashing~\cite{LSH} and MIPS~\cite{MIPS} address
  the problem of computational time, but not learning.  \item
  \textbf{Nearest Neighbors with Learned Metrics}~\cite{maxmarginnn}
  can learn, but are non-incremental.\end{enumerate*}

More recent results combine neural architectures with forms of
approximate nearest neighbor search to address these shortcomings.  For
example,~\cite{rae2016scaling} uses a representation learned for a
task with either randomized kd-trees or locality sensitive hashing on a
the Euclidean distance metric, both of which are
periodically recomputed.  The CMT instead learns at \emph{individual
  nodes} and works for \emph{any} representation, therefore, 
avoiding presupposing that a Euclidean metric is appropriate and could
potentially productively replace the approximate nearest neighbor
subsystem here.

Similarly,~\cite{chandar2016hierarchical} experiments with a variety
of K-MIPS (Maximum Inner Product Search) data structures which the
memory tree could potentially replace to create a higher ceiling on
performance in situations where MIPS is not the right notion of
similarity.

In~\cite{DBLP:journals/corr/AndrychowiczK16} the authors learn a
hierarchical data structure over a pre-partitioned set of memories with
a parameterized JOIN operator shared across nodes.  The use of
pre-partition makes the data structure particularly sensitive to the
(unspecified) order of that prepartition as discussed in appendix 6 of
the LOMTree~\cite{choromanska2015logarithmic}.  Furthermore, tieing the
parameters of JOIN across the nodes deeply constrains the
representation compared to our approach.

Many of these shortcomings are addressed by \textbf{learned hashing}-based models \cite{SemanticHash,PDBC}, which learn a hash function that works well at prediction time, but all current approaches are non-incremental and require substantial training-time overhead.
Finally, \textbf{differentiable memory systems}~\cite{Weston,DNC} are able to refine memories over time, but rely on gradient-descent-based techniques which incur a computational overhead that is inherently linear in the number of memories. 

There are works on leveraging memory systems to perform few-shot learning tasks (\cite{snell2017prototypical,strubell2017fast,santoro2016one}). However they are not logarithmic time and hence incapable of effectively operating at the large scales. Also note that they often address an easier version of the few-shot learning problem where training with a large number of labels for some classes is allowed as an initializer before the few-shot labels are observed. In contrast, we have no initialization phase. 
\section{The  Contextual Memory Tree}
\label{sec:alg}


At a high level, a \DCMT{} (\autoref{fig:cmt_tree}) is a near-balanced binary tree whose size dynamically increases as more memories are inserted. All memories are stored in leaf nodes with each leaf containing at most $c\log n$ memories, where $n$ is the total number of memories and $c$ is a constant independent of the number of memories.  

Learning happens at every node of \DCMT. Each internal node contains a learning router.  Given a query, \DCMT{} routes from the root to a leaf based on left-or-right decisions made by the routers along the way.  Each internal node optimizes a metric, which ensures both its router's ability to predict which sub-tree contains the best memory corresponding to the query, and the balance between its left and right subtrees.  \DCMT{} also contains a global learning scorer that predicts the reward of a memory for a query. The scorer is used at a leaf to decide which memories to return, with updates based on an external reward signal of memory quality. 



\subsection{Data Structures}

\begin{figure}[t]
  \centering
    \includegraphics[width=0.3\textwidth]{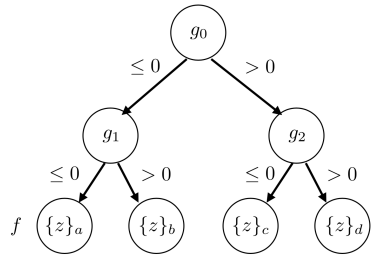}
    \caption{An example of \DCMT. Each internal node contains a binary classifier $g$ as the router, and every leaf stores a small set of memories $\{z\}$. All leafs share a learning scorer $f$ which computes a score of a memory $z$ and a query $x$, and is used to select a memory when a query reaches a leaf.}
    \label{fig:cmt_tree}
\end{figure}

\begin{figure*}
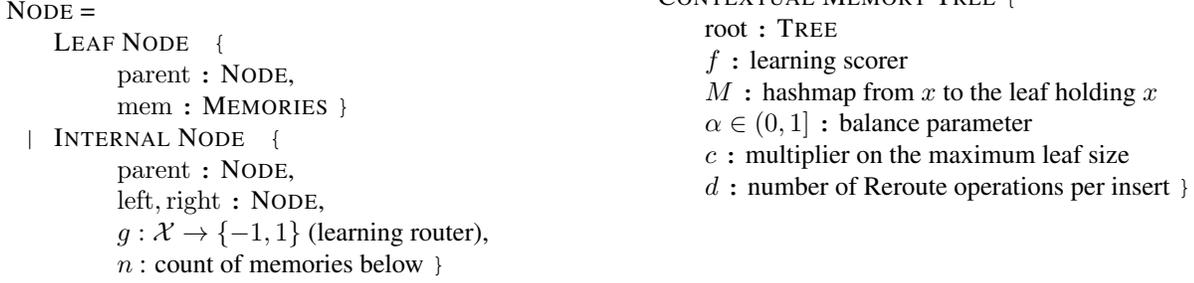

  \begin{tabular}{@{}l@{ }l@{}}
    \begin{minipage}{0.5\textwidth}
    \verb+ +\textsc{Node} = \\
    \verb+    +\textsc{Leaf Node} \verb+ {+\\
    \verb+        +$\parent$ \verb+:+ \textsc{Node}, \\
    \verb+        +$\mem$ \verb+:+ \textsc{Memories} \verb+}+ \\
    \verb+  | +\textsc{Internal Node} \verb+ {+\\
    \verb+        +$\parent$ \verb+:+ \textsc{Node}, \\
    \verb+        +$\Left,\Right$ \verb+:+ \textsc{Node}, \\
    \verb+        +$g : \mathcal{X} \rightarrow \{-1,1\}$ (learning router), \\
    \verb+        +$n$ : count of memories below \verb+}+
    \end{minipage}
    &
    \begin{minipage}{0.5\textwidth}
    \verb+ +\textsc{Contextual Memory Tree} \verb+{+ \\
    \verb+    +root \verb+:+ \textsc{Tree} \\
    \verb+    +$f$ \verb+:+ learning scorer \\
    \verb+    +$M$ \verb+:+ hashmap from $x$ to the leaf holding $x$\\
    \verb+    +$\alpha\in (0,1]$ \verb+:+ balance parameter\\
    \verb+    +$c$ \verb+:+ multiplier on the maximum leaf size\\
    \verb+    +$d$ \verb+:+ number of Reroute operations per insert  \verb+}+\\
  ~\\
  ~\\
    \end{minipage}
  \end{tabular}
   \caption{Data structures for internal and leaf nodes (left) and the full contextual memory tree (right). \label{fig:ds}}
\end{figure*}

A \emph{memory} consists of a \emph{query} (key) $x\in\mathcal{X}$ and its
associated \emph{value} $\omega\in{\Omega}$. We use $z$ to denote the memory
pair $(x,\omega)$ and define $\mathcal{Z} = \mathcal{X}\times{\Omega}$
as the set of $z$.  Given a memory $z$, we use $z.x$ and $z.\omega$ to represent the query and the value of $z$ respectively. 
For instance, for multiclass classification, $x$
is a feature vector and $\omega$ is a label.  Our memory store is organized
into a binary tree.  A \emph{leaf} node in \autoref{fig:ds} (left top)
consists of a parent and a set of memories.  Leaf nodes are connected by
internal nodes as in \autoref{fig:ds} (left, bottom).  An internal
node has a parent and two children, which may be either leaf or
internal nodes, a count $n$ of the number of memories beneath the
node, and a learning router $g: \mathcal{X}\to \{-1,1\}$ which both
routes via $g(x)$ and updates via $g.\update(x,y)$ for $y \in
\{-1,1\}$, or $g.\update(x,y,i)$ where $i \in\mathbb{R}^+$ is an
importance weight of $(x,y)$.  If $g(x)\leq 0$, we route $x$ left, and
otherwise right.


The contextual memory tree data structure in \autoref{fig:ds} (right) has a root
node, a parameter $\alpha \in [0,1]$ which controls how balanced the
tree is, a multiplier $c$ on the maximum number of memories stored in
any single leaf node, and a learning scorer $f:\mathcal{X}\times\mathcal{Z}\to \mathbb{R}$.  Given a query $x$ and memory $z$, the learning scorer predicts the reward one would receive if $z$ is returned as the retrieved memory for query $x$ via $f(x,z)$. Once a reward $r\in [0,1]$ is received for a pair of memory $z$ and query $x$, the learning scorer updates via
$f.\text{update}(x,z,r)$ to improve its ability to predict reward.  Finally,
the map $M$ maps examples to the leaf that contains them, making removal easy.

Given any internal node $v$ and query $x$, we define a data structure
$\Path$ representing the path taken from $v$ to a leaf: $\Path =
\{(v_i, a_i, p_i)\}_{i=1}^N$, where $v_1 = v$, $a_i\in \{\Left,\Right\}$
is the left or right decision made at $v_i$, $p_i\in [0,1]$ is the
probability with which $a_i$ was chosen.  As we show later, $\Path$
communicates to the update rule the information needed to create an
unbiased update of routers.

\subsection{Algorithms}
  
      \begin{algorithm}[t]
\caption{\algorithmname{Path}(query $x$, node $v$)}
\label{alg:path}
\begin{algorithmic}[1]
  \STATE $\Path \gets \emptyset$
  \WHILE{$v$ is not a leaf}
  \STATE $a \gets$ \textbf{if} $v.g(x) > 0$ \textbf{then} $\Right$ \textbf{else} $\Left$
  \STATE Add $(v,a,1)$ to $\Path$
  \STATE $v\gets v.a$
  \ENDWHILE
  \STATE $\Path.\leaf \gets v$
  \STATE \textbf{return} $\Path$
\end{algorithmic}
\end{algorithm}

\begin{algorithm}[t]
\caption{\algorithmname{Query}(query $x$, items to return $k$, exploration probability $\epsilon$)}
\label{alg:query}
\begin{algorithmic}[1]
  \STATE $\Path \gets \algorithmname{Path}(x,\Root)$, $\Path = \{(v_i,a_i,p_i)\}_{i=1}^N$ 
  \STATE $q\in_U [0,1]$
  \IF{$q \geq \epsilon$}
  	\STATE \textbf{return} $(\emptyset, \Top_k (\Path.\leaf,x))$
  \ELSE
  	\STATE Pick $i \in_U \{1,\ldots,N+1\}$
  	\IF {$i \leq N$}
  		\STATE Pick $a' \in_U \{\Right,\Left\}$
		\STATE $l = \algorithmname{Path}(x,v_i.a').\leaf$
		\STATE \textbf{return} 
$
((v_i,a',1/2), \Top_k (l,x))
$
	\ELSE
		\STATE  \textbf{return} 
$
	((\Path.\leaf,\perp,\perp), \Rand_k(\Path.\leaf,x) )
$
 	\ENDIF
  \ENDIF
\end{algorithmic}
\end{algorithm}

\begin{algorithm}[t]
\caption{\algorithmname{Update}$\left( (x, z, r), (v,a,p) \right)$}
\label{alg:update}
\begin{algorithmic}[1]
        \IF {$v$ is a leaf}
	    \STATE $f.\update(x, z, r)$
       \ELSE
  		\STATE{$\hat{r} \gets \frac{r}{p}(\mathbf{1}(a=\Right)-\mathbf{1}(a=\Left))$}
	  	\STATE{$y \gets (1-\alpha) \hat{r} + \alpha (\log v.\Left.n - \log v.\Right.n)$}
	 	\STATE{$v.g.\update(x, \sign(y), |y|)$}
        \ENDIF
 \STATE Run \algorithmname{Reroute} $d$ times
\end{algorithmic}
\end{algorithm}


All algorithms work given a contextual memory tree $\T$. For brievity, we drop $T$ when referencing its fields.
We use $\in_U P$ to chose uniformly at random from a set $P$.

Algorithm\autoref{alg:path} (\algorithmname{Path}) routes a query $x$ from any node $v$ to a leaf, returning the $\Path$ traversed.

Algorithm\autoref{alg:query} (\algorithmname{Query}) takes a query $x$ as input and returns at most $k$ memories. The parameter $\epsilon \in [0,1]$ determines the probability of exploration during training.  Algorithm\autoref{alg:query} first deterministically routes the query $x$ to a leaf and records the path traversed, $\Path$. With probability $1-\epsilon$, we simply return the best memories stored in $\Path.\leaf$:  For a query $x$ and leaf $l$, we use $\Top_k(l,x)$ as a shorthand for the set of $\min\{k,\vert l.\mem\vert\}$ memories $z$ in $l.\mem$ with the largest $f(x,z)$, breaking ties randomly.  We also use $\Rand_k(l,x)$ for a random subset of $\min\{k,\vert l.\mem\vert\}$ memories in $l.\mem$.

With the remaining probability $\epsilon$, we uniformly sample a node along $\Path$ including $\Path.\leaf$.  If we sampled an internal node $v$, we choose a random action $a'$ and call $\Path(x, v.a')$ to route $x$ to a leaf.  This exploration gives us a chance to discover potentially better memories stored in the other subtrees beneath $v$, which allows us to improve the quality of the router at node $v$.  We do uniform exploration at a uniformly chosen node but other schemes are possible.
If we sampled $\Path.\leaf$, we return a random set of memories stored in the leaf, in order to update and improve the learning scorer $f$.
The shorter the path, the higher the probability that exploration happens at the leaf.



\begin{algorithm}[H]
\caption{\algorithmname{Insert}$(\text{node } v,\text{memory } z,\text{Reroute } d)$}
\label{alg:insert}
\begin{algorithmic}[1]
  \WHILE{$v$ is not a leaf}
  \STATE $B = \log v.\Left.n - \log v.\Right.n$
  \STATE{$y \gets \text{sign}\left((1-\alpha) v.g(z.x) + \alpha B \right)$}
  \STATE{$v.g.\text{update}(z.x, y)$}
  \STATE{$v.n \gets v.n+1$}
  \STATE{$v \gets $ \textbf{if} $v.g(z.x) > 0$ \textbf{then} $v.\text{right}$ \textbf{else} $v.\text{left}$}
  \ENDWHILE
  \STATE \algorithmname{InsertLeaf}($v$, $z$)
  \STATE Run \algorithmname{Reroute} $d$ times
\end{algorithmic}
\end{algorithm}


After a query for $x$, we may receive a reward $r$ for a returned memory $z$.  In this case, Algorithm\autoref{alg:update} (\algorithmname{Update}) uses the first triple returned by \algorithmname{Query} to update the router making a randomized decision.  More precisely, Algorithm~\autoref{alg:update} computes an unbiased estimate of the reward difference of the left/right decision which is then mixed with a balance-inducing term on line 5.  When randomization occurred at the leaf, the scorer $f$ is updated instead.

The \algorithmname{Insert} operation is given 
in Algorithm\autoref{alg:insert}. It routes the memory $z$ to be inserted according to the
query $z.x$ from the root to a leaf using internal learning routers,
updating them on descent.  Once reaching a leaf node, $z$
is added into that leaf via \algorithmname{InsertLeaf}.
The label
definition on line 3 in \algorithmname{Insert} is the same as was used
in~\cite{beygelzimer2009conditional}.  That use, however, was for a
different problem (conditional label estimation) and is applied differently 
(controlling the routing of examples rather than just advising a
learning algorithm).  As a consequence, the proofs of correctness
given in section~\ref{sec:computation} differ.

When the number of memories stored in any leaf exceeds the log of the total number of memories, a leaf is split according to Algorithm\autoref{alg:split_leaf}  (\algorithmname{InsertLeaf}).  The leaf node $v$ is promoted to an internal node with two leaf children and a binary classifier $g$ with all memories inserted at $v$.

Because updates are online, they may result in a lack of
self-consistency for previous insertions. This is fixed by \algorithmname{Reroute} (Algorithm \autoref{alg:dream}) on an amortized basis.
Specifically, after every \algorithmname{Insert} operation we call \algorithmname{Reroute}, which randomly samples an example from all the examples, extracts the sampled example from the tree, and then re-inserts it.
This relies on the \algorithmname{Remove} (Algorithm \autoref{alg:remove}) operation, which  finds the location of a memory using the hashmap
then ascends to the parent cleaning up accounting.  When a leaf
node has zero memories, it is removed.  

\begin{algorithm}[t]
\caption{\algorithmname{InsertLeaf}$(\text{leaf node } v, \text{memory } z)$}
\label{alg:split_leaf}
\begin{algorithmic}[1]
  \STATE $v.\mem \gets v.\mem \cup \{ z\}$
  \IF {$|v.\mem| > c \log (\text{root}.n)$}
  \STATE $v' \gets $ a new internal node with two new children
  \FOR{$z\in v.\mem$}
  \STATE \algorithmname{Insert}$(v', z, 0)$
  \ENDFOR
  \STATE $v \leftarrow v'$
  \ENDIF
\end{algorithmic}
\end{algorithm}

\begin{algorithm}[t]
\caption{\algorithmname{Remove}$(x)$}
\label{alg:remove}
\begin{algorithmic}[1]
  \STATE Find $v \gets M(x)$, leaf containing $x$
  \STATE $v.\mem \gets v.\mem \setminus \{x\}$ 
  \WHILE {$v$ is not $\text{root}$}
  \IF {$v.n > 0$}
  \STATE $v.n \gets v.n-1$
  \STATE $v \gets v.\text{parent}$
  \ELSE 
  \STATE $v' = $ the other child of $v.\text{parent}$.
  \STATE $v.\text{parent} \gets v'$
  \STATE $v \gets v'$ 
  \ENDIF
  \ENDWHILE
\end{algorithmic}
\end{algorithm} 

\begin{algorithm}[t]
\caption{\algorithmname{Reroute}$()$}
\label{alg:dream}
\begin{algorithmic}[1]
\STATE Sample $z \in_U M$
\STATE \algorithmname{Remove}$(z.x)$
\STATE \algorithmname{Insert}$(\Root, z, 0)$
\end{algorithmic}
\end{algorithm} 


\section{Properties}
\label{sec:properties}
There are five properties that we want \DCMT to satisfy simultaneously (see \autoref{table:alts} (left) for the five properties).
Storage (in appendix~\ref{sec:storage}) and Incrementality (in
appendix~\ref{sec:incrementality}) are easy observations.

Appendix~\ref{sec:self} shows that in the limit of many \algorithmname{Reroute}s, self-consistency (defined below) is achieved.
\begin{definition} A {\normalfont \DCMT} is {\bf self-consistent} if for all $z$ with a unique $z.x$, $z=$\algorithmname{Query}$(z.x,1,0)$.  \end{definition}

Appendix~\ref{sec:learning} shows a learning property: 
Every internal router asymptotically optimizes to a local maxima of an objective function that mirrors line 5 of \algorithmname{Update}.


This leaves only logarithmic computational time, which we address next.

\subsection{Computational Time}\label{sec:computation}
The computational time analysis naturally breaks into two parts, 
partition quality at the nodes and the time complexity given good partitions. 
 To
connect the two, we first define partition quality.

\begin{definition} A {\bf $K$-balanced partition} of any set has each element of the partition containing at least a $1/K$ fraction of the original set.
  \end{definition}

When partitioning into two sets, $K \geq 2$ is required.  Smaller $K$
result in smaller computational complexities at the cost of worse
predictive performance in practice.

Define the progressive training error of a learning router $g$ after
seeing $T$ examples $x_1,\ldots,x_T$ as $p =
\frac{1}{T}\sum_{t=1}^T\mathbf{1}[g(x_t)\neq y_t]$, where $y_t$ is the
label assigned in line 3 of \algorithmname{Insert}, and $g(x_t)$ is
evaluated immediately after calling $g.\update(x_t,y_t)$ so a mistake
occurs when $g(x_t)$ disagrees with $y_t$ after the update.  The next
theorem proves a bound on the partition balance dependent on the
progressive training error of a node's router and $\alpha$.

\begin{theorem}\label{thm:partition}(Partition bound)
At any point, a router with a progressive training error of $p$ creates a 
$\frac{1+\exp(\frac{1-\alpha}{\alpha})}{(1-p) - \left(1+\exp(\frac{1-\alpha}{\alpha})\right)\frac{1}{T}}$-balanced partition.
\end{theorem}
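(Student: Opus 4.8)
The plan is to track, at the node $v$, how many of the $T$ examples are routed left versus right, and to show that the partition can be imbalanced only to the extent the router's $pT$ mistakes permit. Write $E := \exp((1-\alpha)/\alpha)$ and note $E \ge 1$ since $\alpha \le 1$. Let $L := v.\Left.n$ and $R := v.\Right.n$ be the current subtree sizes; by the left/right symmetry we may assume $L \ge R$, so it suffices to lower-bound $R$. For $t = 1,\dots,T$, let $L_t, R_t$ be the values of $v.\Left.n, v.\Right.n$ just before the $t$-th example descends past $v$. Taking $d = 0$ (so only \algorithmname{Insert} modifies these counts, always by $+1$), the sequences $L_t, R_t$ are nondecreasing, so $L_t \le L$, $R_t \le R$, and $L + R = T$ after all $T$ insertions. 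By line~3 of \algorithmname{Insert}, example $t$ receives label $y_t = \sign((1-\alpha)\,g(x_t) + \alpha B_t)$ with $B_t = \log L_t - \log R_t$, where $g(x_t) \in \{-1,1\}$ is the router value \emph{before} its update; it is then routed left iff the router value \emph{after} its update is $\le 0$; and a \emph{mistake} at step $t$ means this post-update value disagrees with $y_t$. By definition of the progressive training error there are exactly $pT$ mistakes.

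The crux is the claim that if $L_t > E R_t$ then $y_t = +1$. Indeed $L_t > E R_t$ gives $B_t > (1-\alpha)/\alpha$, so $(1-\alpha)\,g(x_t) + \alpha B_t \ge -(1-\alpha) + \alpha B_t > -(1-\alpha) + (1-\alpha) = 0$, hence $y_t = +1$. Consequently, if example $t$ is routed left while $L_t > E R_t$, then its post-update router value is $-1 \ne +1 = y_t$, i.e.\ step $t$ is a mistake. Call such a step an \emph{overfull left-route}; then at most $pT$ steps are overfull left-routes.

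Now we bound $L$. If every left-route is overfull, then $L \le pT$, so $R = T - L \ge (1-p)T$, which already exceeds the claimed $\tfrac1K$-fraction of $T$ and we are done; otherwise let step $s$ be the last left-route that is not overfull. Then $L_s \le E R_s \le E R$, so the left count immediately after step $s$ is at most $E R + 1$; every left-route strictly after step $s$ is overfull, and overfull left-routes number at most $pT$ in all, so $L \le E R + 1 + pT$. Substituting into $L + R = T$ gives $(1-p)T \le (1+E) R + 1$, whence
\[
  \frac{R}{T} \;\ge\; \frac{(1-p) - 1/T}{1+E} \;\ge\; \frac{(1-p) - (1+E)/T}{1+E} \;=\; \frac{1}{K},
\]
the last inequality using $1 \le 1+E$. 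Since $L \ge R$ and $K \ge 1+E \ge 2$, also $L/T \ge 1/2 \ge 1/K$, so both sides of the partition contain at least a $1/K$ fraction of the $T$ examples; the partition is $K$-balanced, as claimed.

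The step I expect to be the main obstacle is the bookkeeping in the last two paragraphs: establishing that the imbalance can rise above the threshold $E$ only through router mistakes, converting this into $L \le E R + 1 + pT$ via the ``last non-overfull left-route'', and carrying the $+1$ terms so the denominator lands on exactly $(1-p) - (1+E)/T$ (the intermediate bound with $-1/T$ is in fact slightly stronger). A secondary point worth a remark: \algorithmname{Reroute}'s \algorithmname{Remove} never calls $g.\update$, so the router's training sequence is precisely the list of \algorithmname{Insert} descents through $v$, and the argument above (with $d=0$) captures its behavior; a fully general statement would additionally track how removals shrink the subtree counts.
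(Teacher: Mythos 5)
Your proof is correct and rests on the same key observation as the paper's: once $L_t/R_t$ exceeds $\exp\bigl(\frac{1-\alpha}{\alpha}\bigr)$ the balance term forces $y_t=+1$, so any further left-route is by definition a progressive mistake, and hence the imbalance beyond that threshold is charged to the at most $pT$ mistakes. The only difference is the accounting — the paper maintains the invariant $R_t \le (1-p_t)\kappa N_t + (1-\kappa) + p_tN_t$ by induction on $t$, whereas you locate the last non-overfull left-route and count mistakes after it directly — which yields the same conclusion with a marginally tighter additive term ($-1/((1+E)T)$ in place of $-1/T$), and your closing caveat about \algorithmname{Reroute}/removals applies equally to the paper's argument.
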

The proof is in appendix~\ref{proof:partition}, followed by a bound on the depth of $K$-partition trees in appendix~\ref{proof:depth}.  
As long as $(1-p) > \exp(\frac{1-\alpha}{\alpha})\frac{1}{T}$ holds, \autoref{thm:partition} provides a nontrivial bound on partition.
Examining limits, when $p = 0$, $\alpha = 1$ and $T = \infty$, we have $K = 2$, which means \DCMT{} becomes a perfectly balanced binary tree.   If $p = 0.5$ (e.g., $g$ guesses at random), $\alpha = 0.9$ (used in all our experiments) and $T = \infty$, we have $K \leq 4.3$. For any fixed $T$, a smaller progressive error  $p$ and a larger $\alpha$ lead to a smaller $K$.



Next, we prove that $K$ controls the computational time.

\begin{theorem}\label{thm:computation}(Computational Time) If every router $g$ in a \DCMT with $T$ previous calls to \algorithmname{Insert} creates a $K$-partition, the worst case computation is $O(d (K + c) \log T)$ for \algorithmname{Insert},
  $O((K+c) \log T)$ for \algorithmname{Query}, and $O(1)$ for \algorithmname{Update} if all stated
  operations are atomic.
 \end{theorem}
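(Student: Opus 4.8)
The plan is to reduce the cost of each operation to three quantities---the depth of the tree, the maximum number of memories held in one leaf, and the $O(1)$ cost of a single atomic call to a router, to the scorer $f$, or to the hashmap $M$---and then to bound the first two under the standing hypothesis that every router induces a $K$-partition.

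\textbf{Depth and leaf size.} First I would observe that the number of stored memories is $n = O(T)$: each of the $T$ top-level \algorithmname{Insert} calls adds one memory, every \algorithmname{Reroute} performs a \algorithmname{Remove} followed by an \algorithmname{Insert} with no net change, and the internal \algorithmname{Insert} calls made while splitting a leaf merely re-place memories that already exist. By hypothesis each of an internal node's two subtrees holds at least a $1/K$ fraction of that node's memories, so the memory count of a subtree shrinks by a factor of at least $K/(K-1)$ with every step of a root-to-leaf descent; since a subtree with fewer than two memories cannot be $K$-partitioned and hence is a leaf, every root-to-leaf path has length at most $1 + \log_{K/(K-1)}(n/2) = O(K\log T)$, using $1/\log(K/(K-1)) = O(K)$---this is exactly the depth bound for $K$-partition trees established in appendix~\ref{proof:depth}. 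For the leaf size, \algorithmname{InsertLeaf} splits a leaf as soon as its size exceeds $c\log(\text{root}.n)$, so every leaf holds at most $c\log(\text{root}.n)+1 = O(c\log T)$ memories.

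\textbf{Accounting.} With these two facts, the rest is bookkeeping. In \algorithmname{Insert}, routing the new memory from the root to a leaf is $O(\text{depth}) = O(K\log T)$ iterations, each a constant number of atomic router evaluations/updates and counter increments; \algorithmname{InsertLeaf} is $O(1)$ unless it splits, in which case it re-inserts the $O(c\log T)$ memories of the full leaf into a fresh two-leaf subtree at cost $O(c\log T)$ (see the obstacle below); and the trailing $d$ \algorithmname{Reroute}s each consist of a \algorithmname{Remove} (a hashmap lookup plus an ascent to the root, $O(K\log T)$) and an \algorithmname{Insert} called with $d=0$, i.e.\ another $O((K+c)\log T)$ apiece, giving $O(d(K+c)\log T)$ overall, which dominates. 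In \algorithmname{Query}, the deterministic \algorithmname{Path} call is $O(K\log T)$, evaluating $\Top_k$ or $\Rand_k$ at a leaf scans its $O(c\log T)$ memories with one atomic scorer evaluation each, and the exploration branch performs at most one further \algorithmname{Path} call of length $O(K\log T)$ from a node on the path, for a total of $O((K+c)\log T)$. In \algorithmname{Update}, the branch test, the arithmetic for $\hat r$ and $y$, and the single $f.\update$ or $g.\update$ call are each $O(1)$ by atomicity, so the update proper is $O(1)$ (when the trailing \algorithmname{Reroute}s are charged to \algorithmname{Update} they contribute the same $O(d(K+c)\log T)$ as in \algorithmname{Insert}).

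\textbf{Main obstacle.} The delicate point is showing that a leaf split costs only $O(c\log T)$---that it neither cascades into a chain of further splits nor re-inserts into a tall subtree. Here I would use that \algorithmname{InsertLeaf} re-inserts the $m := c\log(\text{root}.n)+1$ memories of the full leaf into a new internal node whose descendants' split threshold equals $m-1$ and stays frozen throughout (the internal \algorithmname{Insert}$(\cdot,\cdot,0)$ calls never touch $\text{root}.n$), so a child of that node can hit the threshold only by receiving essentially all $m$ memories, and a single $K$-partition of such a full leaf leaves at most $m(1-1/K)\le m-1$ memories in each child whenever $m>K$---the only regime that matters, since otherwise the entire tree already has depth $O(\log T)$. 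Thus the new subtree stays of depth $O(1)$, the re-insertion is $O(m)=O(c\log T)$, and equivalently the amortized split cost per \algorithmname{Insert} is only $O(K)$. Making this no-cascade claim precise, and checking that it leaves the global $O(K\log T)$ depth bound intact, is the part of the argument that needs care; everything else is a direct sum of the per-level and per-leaf costs above.
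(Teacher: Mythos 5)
Your proof is correct and follows essentially the same route as the paper's: bound the depth by $O(K\log T)$ via the $K$-partition depth lemma, bound the leaf size by $O(c\log T)$ via the split threshold, and sum the per-level and per-leaf atomic costs for each operation (including charging the trailing \algorithmname{Reroute}s). The only substantive difference is that you justify the non-cascading of leaf splits explicitly via the $K$-partition hypothesis, whereas the paper simply asserts that each re-insertion's while loop executes once; your version is the more careful one.
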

The proof is in appendix~\ref{proof:computation}.  This theorem
establishes logarithmic time computation given that $K$-partitions are
created.  These two theorems imply that the computation is logarithmic
time for all learning algorithms achieving a \emph{training} error
significantly better than $1$.

\newcolumntype{g}{>{\columncolor{black!10}}c}

\section{Experiments}
\label{sec:experiment}
\DCMT is a subsystem for other learning tasks, so it assists other inference and learning algorithms.  We test the application of \DCMT to three systems, for multiclass classification, multilabel classification, and image retrieval.  
Seperately, we also ablate various elements of \DCMT to discover its strengths and weaknesses.

We implemented \DCMT as a reduction to Vowpal
Wabbit's \cite{langford2007vowpal} default learning algorithm. \footnote{\url{https://github.com/LAIRLAB/vowpal_wabbit/tree/master/demo/memory_tree}} The routers ($g$) and the learning scorer ($f$) are all linear functions and are incrementally updated by an Adagrad \citep{duchi2011adaptive}  gradient method in VW.  
Similarly, most baselines are implemented in the
same system with a similar or higher level of optimization.

\subsection{Application: Online Extreme Multi-Class Classification}

Since \DCMT operates online, we can evaluate its online performance using progressive validation \cite{blum1999beating} (i.e., testing each example ahead of training).  Used online, we \textsc{Query} for an example, evaluate its loss, then apply \textsc{Update} with the observed loss followed by \textsc{Insert} of the data point.  In a multiclass classification setting, a memory $z$ is a feature vector $x$ and label $\omega$. Given a query $x$, \DCMT returns a memory $z$ and receives a reward signal $\bm{1}[z.\omega = \omega]$ for update. Finally, \DCMT inserts $(x,\omega)$.

We test the online learning ability of \DCMT on two multiclass classification datasets, ALOI (1000 labels with 100 examples per label) and WikiPara 3-shot (10000 labels with 3 examples per label), against two other logarithmic-time online multiclass classification algorithms, LOMTree \cite{choromanska2015logarithmic} and Recall Tree \cite{daume2016logarithmic}.  We also compare against a linear-time online multiclass classification algorithm, One-Against-All (OAA).

 \begin{figure}[t]
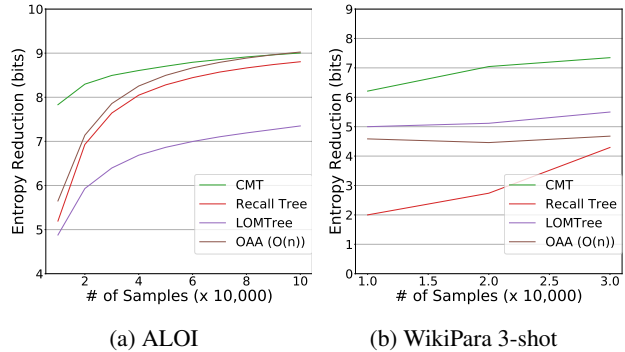

    \begin{subfigure}[b]{0.235\textwidth}
         \includegraphics[width=1.135\textwidth,keepaspectratio]{./figures/online_progressive_error_aloi_entropy_red.pdf}
         \caption{ALOI}
         \label{fig:aloi_online}
     \end{subfigure}
      \begin{subfigure}[b]{0.235\textwidth}
     	\includegraphics[width=1.135\textwidth,keepaspectratio]{./figures/online_progressive_error_wiki_3_shot_2.pdf}
         \caption{WikiPara 3-shot}
         \label{fig:wiki_online}
      \end{subfigure}
     \caption{(a) Online progressive performance of \DCMT with respect to the number of samples  on ALOI (a) and WikiPara 3-shot (b). \DCMT{} consistently outperforms all baselines.}
 \label{fig:online_perform}
 \vspace{-10pt}
 \end{figure}

\autoref{fig:online_perform} summarizes the results in terms of progressive performance.  On both datasets, we report entropy reduction from the constant predictor (the higher the better).  The entropy reduction of a predictor $A$ from another predictor $B$ is defined as $\log_2( p_{A}) - \log_2(p_{B})$, where $p_A$ and $p_B$ are prediction accuracies of $A$ and $B$.

{\bf Conclusion}: \DCMT greatly outperforms the baselines in the small
number of examples per label regime.  This appears to be primarily due
to the value of explicit memories over learned parameters in this regime. 

\begin{table*}[t]
  \centering
  \scriptsize
  \begin{tabular}{lcccgggccc}
    \toprule
    & \multicolumn{3}{c}{\textbf{RCV1-1K}}
    & \multicolumn{3}{g}{\textbf{AmazonCat-13K}}
    & \multicolumn{3}{c}{\textbf{Wiki10-31K}} \\
    \textbf{Approach} & loss & Test time & Train time & loss & Test time & Train time & loss & Test time & Train time \\
    \midrule
    \DCMT & $2.5$ & $1.4$ms & $1.9$hr & $3.2$ & $1.7$ms & \phantom{$1$}$5.3$hr & $18.3$& \phantom{$1$}$25.3$ms & $1.3$hr \\
    OAA & $2.6$ & $0.5$ms & $1.3$hr & $3.0$ & $8.7$ms & $15.5$hr & $20.3$ & $327.1$ms & $7.2$hr\\
    \bottomrule
  \end{tabular}
\vspace{-5pt}
\caption{Hamming Loss, test time per example (ms), and training time (hr) for multi-label tasks.}
\label{tab:multi_label_perf}
\vspace{-10pt}
\end{table*}

\subsection{Application: Batch Few-shot Multi-Class Classification}


We can also use \DCMT{} in an offline testing mode as well by using \DCMT{} with multiple passes over the the training dataset and testing it on a separate test set.  We again use \DCMT on few-shot multi-class classification, comparing it to LOMTree and Recall Tree. 

Starting first with the ALOI dataset, we tested both the unsupervised version (i.e., using only \textsc{Insert}) and the supervised version (i.e., using \textsc{Insert} for the first pass, and using \textsc{Update} for subsequent passes). We used three passes for all algorithms. The supervised version of \DCMT achieved 26.3\% test prediction error, outperforming LOMTree (66.7\%) and Recall Tree (28.8\%). The supervised version of \DCMT also significantly outperforms the unsupervised one (75.8\% error rate), showing the benefit of the \textsc{Update} procedure. Since ALOI has 1000 classes, a constant predictor has prediction error larger than 99\%.


We then test \DCMT on more challenging few-shot multi-class classification datasets, WikiPara $S$-shot ($S=1,2,3$) and ImageNet $S$-shot ($S=1,2,3,5$) with only $S$ examples per label.  \autoref{fig:few_shots_results} summarizes the statistical performance (entropy reduction compared to a constant predictor) of supervised \DCMT, unsupervised \DCMT (denoted as \DCMT(u)), and the two logarithmic-time baselines. For one-shot experiments (WP 1-s and IN 1-s on \autoref{fig:few_shots_results}), \DCMT  outperforms all baselines. The edge of \DCMT degrades gradually over baselines as $S$ increases (IN $S$s with  $S>1$ in \autoref{fig:few_shots_results}).  All details are included in \autoref{tab:details_few_shot_table} in Appendix~\autoref{sec:few_shot_details}. 

{\bf Conclusion}: The high performance of \DCMT with a small number of examples per label persists in batch training.  The remarkable performance of unsupervised \DCMT over supervised baselines suggests self-consistency can provide  
nearest-neighbor performance without explicit reward.

\subsection{Application: Multi-Label Classification with an External Inference Algorithm}

In this set of experiments, instead of using \DCMT as
an inference algorithm, we integrate \DCMT with an external inference
procedure based on One-Against-All.  \DCMT is not aware of the
external multi-label classification task, so this is an example of how an
external inference algorithm can leverage the returned memories as an
extra source of information to improve performance. Here each memory
$z$ consists of a feature vector $x$ and label vector $\omega\in
\{0,1\}^M$, where $M$ is the number of unique labels.  Given a query
$x$, its ground truth label vector $\omega$, and a memory $z$, we choose
the F1-score between $\omega$ and $z.\omega$ as the reward signal.  We set $k$
to $c\log(N)$ (i.e., \DCMT returns all memories in the leaf we
reach).  Given a query $x$, with the returned memories
$\{z_1,...,z_k\}$, the external inference procedure extracts the
unique labels from the returned memories and performs a
One-Against-Some (OAS) inference \cite{daume2016logarithmic} using the
extracted labels.\footnote{OAS takes $x$ and a small
  set of candidate labels and returns the labels with a positive
  score, according to a learned scoring function. After
  prediction, the OAS predictor receives the true labels $y$
  associated with this $x$ and performs an update to its score
  function based on the true labels and the small candidate label
  set.}  The external system then calls \algorithmname{Update} for the
returned memories.  Since \DCMT returns logarithmically many memories, we
guarantee that the number of unique labels from the returned memories
is also logarithmic. Hence augmenting OAS with \DCMT enables
logarithmic inference and training time.

We compare \DCMT-augmented OAS with multi-label OAA under the Hamming
loss. We compare \DCMT-augmented OAS to OAA on three multi-label
datasets, RCV1-1K \cite{prabhu2014fastxml}, AmazonCat-13K
\cite{mcauley2013hidden}, and Wiki-31K
\cite{zubiaga2012enhancing,bhatia2015sparse}.
(The datasets are described in \autoref{tab:dataset_multi_label} in
Appendix~\ref{sec:datasets}.)  \autoref{tab:multi_label_perf} summarizes the performance
of \DCMT and OAA. (LOMTree and Recall Tree are excluded because they do not operate in multi-label settings.)

{\bf Conclusion}: \DCMT-augmented OAS achieves similar statistical
performance to OAA, even mildly outperforming OAA on Wiki10-31K, while
gaining significant computational speed up over a vector optimized OAA in training and inference
on datasets with a large number of labels (e.g., AmazonCat-13K and
Wiki10-31K). Note that the VW implementation of OAA operates at a higher level optimization  and involves vectorized computations that increase throughput by a factor of 10 to 20. Hence we observe for RCV1-1K with 1K labels, OAA can actually be more computationally efficient then \DCMT. This set of experiments shows that
\DCMT-augmented OAS can win over OAA both statistically and
computationally for challenging few-shot multi-label datasets with a
large number of labels.

\subsection{Application: Image Retrieval}

We test \DCMT on an image retrieval task where the goal is to find an image given a caption.  We used three benchmark datasets, (1) UIUC Pascal Dataset \cite{rashtchian2010collecting}, (2) Flickr8k dataset \cite{hodosh2013framing}, and (3) MS COCO \cite{lin2014microsoft}, with feature representations described in \autoref{sec:datasets}.  
Here, a memory $z$ consists of (features of) a caption $x$ and an image $\omega$.  Given a query, \DCMT returns a memory $z = (x,\omega)$.
Our reward function is the cosine similarity between the returned memory's image $z.\omega$, and the ground truth image $\omega$ associated with the query $x$.


To show the benefit of learning in \DCMT, we compare it to Nearest Neighbor Search (NNS) and a KD-Tree as an Approximate NN data structure on this task, using the Euclidean distance  $\|x - z.x \|_2$ in the feature space of captions as the NNS metric.  
Both \DCMT and NNS are tested on a separate test set, with the average reward of the retrieved memory reported.

\autoref{tab:image_experiment_small} summarizes the speedup over NNS (implemented using a linear scan) and \wen{KD-Tree (KD tree implementation from scikit-learn \cite{pedregosa2011scikit}).  Note that in our datasets, the feature of a query is high dimensional ($2^{20}$) but extremely sparse. Since KD-Tree cannot take advantage of  sparsity, both the construction  and inference procedure is extremely slow (even slower than a NNS). We also emphasize here that a KD-Tree does not operate in an online manner. Hence in our experiments, we have to feed all queries from the entire training dataset to KD-Tree to initialize its construction, which makes it impossible to initialize the run of KD-Tree on MSCOCO. } 

{\bf Conclusion}: The difference in reward is negligible (on the order of $10^{-3}$) and statistically insignificant. (See Appendix \autoref{tab:image} for details.)
However, \DCMT is significantly faster.

\begin{table}[H]
  \vspace{-3mm}
  \centering
  \footnotesize
  \begin{tabular}{lcc}
    \toprule
    & \multicolumn{2}{c}{\DCMT} \\
    & unsup & sup \\
    \midrule
    \textbf{Pascal}   & 5.7 / 9400 & 1.3 / 2100 \\
    \textbf{Flickr8k} & 26.0 / 33000 & 6.0 / 7700 \\
    \textbf{MSCOCO}   & 21.0 / $\sim$ & 6.5 / $\sim$ \\
    \bottomrule
  \end{tabular}
  \caption{\footnotesize Speedups over linear NNS (left) and KD-Tree (right), in (unsup)ervised and (sup)ervised mode.
    \label{tab:image_experiment_small}}
\vspace{-10pt}
\end{table}


\subsection{Ablation Analysis of \DCMT}
\label{sec:ablation}

We conduct experiments to perform an ablation study of \DCMT in the context of multi-class classification, where it operates directly as an inference algorithm.

 \begin{figure}[t]
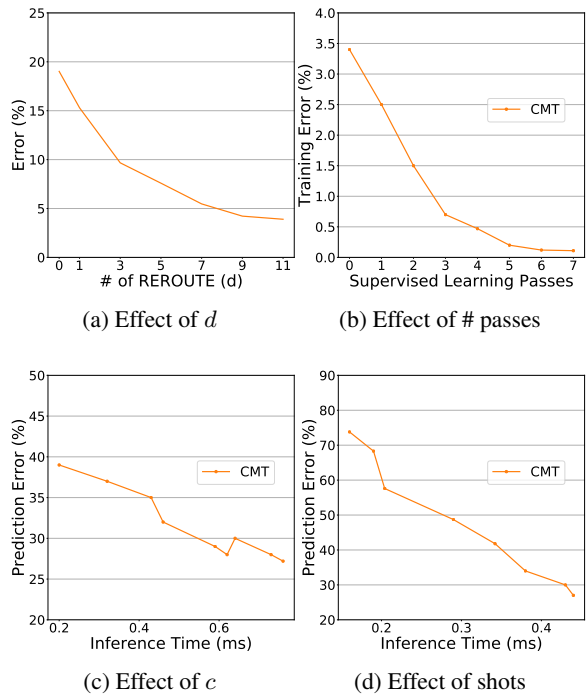

  	\centering
    \begin{subfigure}[b]{0.22\textwidth}
         \includegraphics[width=1.12\textwidth,keepaspectratio]{./figures/self_consistency_test_new_CMT_2.pdf}
         \caption{Effect of  $d$}
         \label{fig:effect_of_d}
     \end{subfigure}
      \begin{subfigure}[b]{0.22\textwidth}
     	\includegraphics[width=1.12\textwidth,keepaspectratio]{./figures/wikipara_one_shot_training_error_new_CMT_2.pdf}
         \caption{Effect of \# passes}
         \label{fig:wiki_overfit}
      \end{subfigure}
       \begin{subfigure}[b]{0.22\textwidth}
         \includegraphics[width=1.12\textwidth,keepaspectratio]{./figures/aloi_few_shots_effect_c_performance_time_new_CMT_2.pdf}
         \caption{Effect of  $c$}
         \label{fig:effect_of_c}
       \end{subfigure}
       \begin{subfigure}[b]{0.22\textwidth}
         \includegraphics[width=1.12\textwidth,keepaspectratio]{./figures/aloi_few_shots_effect_shots_performance_time_new_CMT_2}
         \caption{Effect of shots}
         \label{fig:effect_of_shots}
       \end{subfigure}
     \caption{(a) As the number of \textsc{reroute} operations increases \DCMT performs better, asymptoting at ~10 on WikiPara one-shot, (b) the effect of the number of {UPDATE} calls on the training error on WikiPara one-shot, (c) the inference time and prediction error with respect to $c$ on ALOI, and (d) the inference time and prediction error with respect to number of shots on ALOI. }
 \label{fig:ablation_test_plots}
 \vspace{-10pt}
 \end{figure}

We test the self-consistency property
on WikiPara with 
only one training example per class (see \autoref{fig:effect_of_d}).  
We ran \DCMT in an unsupervised
fashion, by only calling \textsc{Insert} and using $-\|x-z.x\|$ as $f(x,z)$ 
to select memories at leafs. We report the self-consistency error
with respect to the number of \textsc{reroute} calls per insertion
(parameter $d$) after four passes over the dataset (tuned using a holdout set).
As $d$ increases, the self-consistency error rapidly drops.

To show that \textsc{Update} is beneficial, we use multiple passes to drive the training error down to nearly zero.
\autoref{fig:wiki_overfit} shows the training error versus the
number of passes on the WikiPara one-shot dataset (on
the $x$-axis, we plot the number of additional passes over the dataset, with zero corresponding to a single pass).
Note that the training error is essentially equal
to the self-consistency error in WikiPara One-shot, hence 
\textsc{Update} further enhances self-consistency due to the extra
\textsc{reroute} operations in \textsc{Update}.

To test the effect of the multiplier $c$ (the leaf memories
multiplier), we switch to the ALOI dataset
\cite{geusebroek2005amsterdam}, which has 100 examples per class enabling
good generalization.  \autoref{fig:effect_of_c} shows that statistical
performance improves with inference time and the value of $c$.  In
Appendix~\autoref{sec:missing_plots}, we include plots showing
statistical and inference time performance vs $c$ in
\autoref{fig:aloi_few_shots_effect_c} with inference time scaling linearly
in $c$ as expected.

Last, we test \DCMT on a series of progressively more difficult
datasets generated from ALOI via randomly sampling $S$ training
examples per label, for $S$ in 1 to 100. ALOI has 1000 unique labels
so the number of memories \DCMT stores scales as $S\times1000$, for
$S$-shot ALOI.  We fix $c = 4$.  \autoref{fig:effect_of_shots} shows
the statistical performance vs inference time as $S$ varies. The
prediction error drops quickly as we increase $S$.
Appendix~\autoref{sec:missing_plots} includes detailed plots.
Inference time increases logarithmically with $S$
(\autoref{fig:aloi_few_shot_time}), matching \DCMT's logarithmic time
operation theory.

\section{Conclusion}
\DCMT provides a new tool for learning algorithm designers by enabling
learning algorithms to work with an unsupervised or reinforced
logarithmic time memory store.  Empirically, we find that \DCMT
provides remarkable unsupervised performance, sometimes beating
previous supervised algorithms while reinforcement provides steady
improvements.


\small

\bibliography{example_paper}
\bibliographystyle{icml2019}

\newpage
\onecolumn
\appendix

\section{Theorems and proofs}
\subsection{Storage}
\label{sec:storage}
Bounded storage is an easy desiderata to satisfy.  

\begin{claim}For any $T>0$, a contextual memory tree after $T$ insertions requires only $O(T)$ storage.\end{claim}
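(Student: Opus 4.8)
The plan is to account for all storage used by the data structure and show each component is $O(T)$. There are two kinds of storage: the memories themselves stored in leaves, and the internal tree structure (internal nodes, their routers, counts, child/parent pointers). First I would observe that each of the $T$ inserted memories is stored in exactly one leaf (a memory lives in precisely one leaf's $\mem$ set, and the hashmap $M$ records which one), so the total memory content across all leaves is exactly $T$, giving $O(T)$ storage for memories and $O(T)$ for the hashmap $M$.

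Next I would bound the number of nodes in the tree. The key structural fact is that a leaf is only ever split (promoted to an internal node, via \algorithmname{InsertLeaf}) once it holds more than $c\log(\Root.n)$ memories, and a leaf with zero memories is pruned by \algorithmname{Remove}. So every leaf that currently exists and holds at least one memory, together with the fact that memory contents are disjoint across leaves, gives at most $T$ nonempty leaves; one also has to note that \algorithmname{Remove} eliminates empty leaves, so there are no ``wasted'' empty leaves lingering (except possibly a constant number transiently during an operation). Since a binary tree with $L$ leaves has exactly $L-1$ internal nodes, the total node count is $O(T)$. Each node stores only $O(1)$ fields — a parent pointer, and either a memory pointer set or two child pointers plus a router $g$ and a count $n$ — under the standard assumption that a router occupies constant space (it is a fixed linear model, as in the experiments, or more generally treated as atomic). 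Hence the tree overhead is $O(T)$.

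The main obstacle, such as it is, is being careful about the re-insertion/split cascade: when \algorithmname{InsertLeaf} splits an overfull leaf it re-inserts all of its memories into a fresh subtree, and one must check this does not blow up the node count — but since each re-inserted memory still ends in exactly one leaf and the subtree created is again a binary tree whose leaf count is bounded by the number of memories re-inserted, the disjointness argument still caps the total number of leaves by $T$. Similarly, \algorithmname{Reroute} only removes and re-inserts a single memory, preserving the invariant. Putting the pieces together: storage $= O(T)$ for memories $+\ O(T)$ for the hashmap $+\ O(T)$ for $O(T)$ nodes each of size $O(1)$ $=\ O(T)$.
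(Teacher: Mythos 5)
Your proposal is correct and follows essentially the same route as the paper's proof: the hashmap is $O(T)$, every surviving leaf holds at least one memory so there are at most $T$ leaves, the number of internal nodes is bounded by the number of leaves, and each node occupies constant space. You are somewhat more explicit than the paper about the split/reroute edge cases and the assumption that each router takes $O(1)$ space, but the underlying argument is the same.
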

\begin{proof}
The hashmap is $O(T)$. 
The number of internal nodes is bounded by the number of leaf nodes. Since
every leaf node has at least one unique memory, the storage requirement for internal nodes is $O(T)$, and so is the storage requirement for the leaves.
\end{proof}

\subsection{Incrementality}
\label{sec:incrementality}
By observation, all contextual memory tree algorithms are incremental
so the overall operation is incremental as long as the underlying learning
algorithms for the learning scorer and routers are incremental.  In fact, the
contextual memory tree is online so long as the underlying learning
algorithms are online.

\subsection{Partitioning}
\label{proof:partition}
Here we prove the partition bound (Theorem~\ref{thm:partition}).
\begin{proof}
Let $R_t$ and $L_t$ be the number of memories in the right and left subtree respectively, at the start of round $t$ for which we are proving the theorem. 

Observe that if
\[
\alpha \log \frac {L_t} {R_t} > 1 - \alpha,
\]
or, equivalently, if
\[
\frac {L_t}{R_t} > e^{\frac{1 - \alpha}{\alpha}},
\] or equivalently, if
\begin{align}
\frac{L_t}{N_t} > \frac{1}{1 + \exp(1-\frac{1}{\alpha})},
\end{align} where $N_t = R_t + L_t$ we always have $y=1$.

A symmetric argument shows that if
\begin{align}
\frac{R_t}{N_t} > \frac{1}{1 + \exp(1-\frac{1}{\alpha})},
\end{align} we always have $y=-1$.

Denote $\kappa = \frac{1}{1 + \exp(1-\frac{1}{\alpha})}$.  Note that $\kappa < 1$. We claim that for any $t$, we have
\begin{align}
R_t \leq (1-p_t)\kappa N_t + (1-\kappa) + p_t N_t,  \text{ and } L_t \leq (1-p_t)\kappa N_t + (1-\kappa) + p_t N_t,
\end{align} 
where $p_t$ is the progressive training error at the beginning of round $t$.
We prove the claim by induction on $t$. The base case holds by inspection, assuming $L_2 = R_2 = 1$ and $p_2 = 0$ (i.e., by simply initializing all leaf with a default example).

Assume that the claim holds for step $t$, and consider step $t+1$.

Below we first consider the first case: (1) $R_t > \kappa N_t$. 

Note that in this case, we always have $y = -1$. Whether or not we route the example to the left depends on whether or not the router makes a post-update mistake.  Hence, we discuss two sub-cases below. 

(a) The router does not make a mistake here. In this case, the router routes the example to the left. Since no mistake happens in this round, we have $p_{t+1}N_{t+1} = p_t N_t$, i.e., the total number of mistakes remain the same.  Then, we have:
\begin{align}
R_{t+1} = R_t \leq (1-p_t)\kappa N_t + (1-\kappa) + p_t N_t \leq (1-p_{t+1})\kappa N_{t+1} + (1-\kappa) + p_{t+1}N_{t+1},
\end{align} where the inequality comes from the fact that $N_{t+1} = N_t + 1 > N_t$.

Now we consider the second sub-case here.

(b) The router does make a mistake. In this case, the router routes the example to the right. Note that in this case, we have $p_{t+1}N_{t+1} = p_t N_t + 1$, i.e., the total number of mistakes increases by one.  Hence, we have for $R_{t+1}$:
\begin{align}
R_{t+1} & = R_{t}+1 \leq (1-p_t)\kappa N_t + (1-\kappa) + p_tN_t + 1 \nonumber\\
& = \kappa N_t - \kappa p_t N_t + (1-\kappa) + p_tN_t + 1 \nonumber\\
& =  \kappa N_t - \kappa p_{t+1}N_{t+1} + \kappa + (1-\kappa) + p_tN_t + 1\nonumber\\
& = \kappa N_{t+1} - \kappa p_{t+1} N_{t+1} + (1-\kappa) + p_{t+1}N_{t+1},
\end{align} where the second equality uses the fact that $\kappa p_{t+1}N_{t+1} = \kappa p_tN_t + \kappa$.

With case (a) and case (b), we can conclude that for case (1) where $R_t > \kappa N_t$, we have:
\begin{align}
R_{t+1} \leq (1-p_{t+1})\kappa N_{t+1} + (1-\kappa) + p_{t+1}N_{t+1}.
\end{align}

Now we consider the second case (b): $R_t \leq \kappa N_t$.
In this case, regardless of where the example routes, we always have:
\begin{align}
R_{t+1} \leq R_t + 1 \leq \kappa N_t + 1 = \kappa (N_{t+1} - 1) + 1 = \kappa N_{t+1} + 1 - \kappa.
\end{align}
Note that since $\kappa < 1$, we must have $p_{t+1}N_{t+1} \geq p_{t+1}\kappa N_{t+1}$. Hence we have
\begin{align}
R_{t+1} & \leq \kappa N_{t+1} + 1 - \kappa \nonumber\\
& \leq \kappa N_{t+1} + 1 - \kappa + p_{t+1}N_{t+1} - \kappa p_{t+1}N_{t+1} = (1-p_{t+1})\kappa N_{t+1} + (1-\kappa) + p_{t+1}N_{t+1}.
\end{align}

With case (1) and case (2), we can conclude that for $R_{t+1}$, we always have:
\begin{align}
R_{t+1} \leq  (1-p_{t+1})\kappa N_{t+1} + (1-\kappa) + p_{t+1}N_{t+1}.
\end{align} A symmetric argument implies
\begin{align}
L_{t+1 }\leq  (1-p_{t+1})\kappa N_{t+1} + (1-\kappa) + p_{t+1}N_{t+1}.
\end{align}
By induction, we prove our claim.

Now given $L_{t}\leq  (1-p_{t})\kappa N_{t} + (1-\kappa) + p_{t}N_{t}$, we divide $N_t$ on both sides to get:
\begin{align}
L_{t}/N_t \leq (1- p_t) \kappa + \frac{1-\kappa}{N_t} + p_t.
\end{align} Multiplying both sides by $-1$ and adding $1$, we get:
\begin{align}
1 - L_t/N_t = \frac{R_t}{N_t} &\geq 1 - (1-p_t)\kappa + \frac{\kappa - 1}{N_t} - p_t  \nonumber\\
& =  (1- p_t) - (1-p_t)\kappa + \frac{\kappa - 1}{N_t} \nonumber\\
& =  (1-p_t)(1-\kappa) + \frac{\kappa - 1}{N_t}.
\end{align} As $\kappa > 0$, we get:
\begin{align}
R_t/N_t \geq (1-p_t)(1-\kappa) - \frac{1}{N_t}.
\end{align}
By symmetry, we have:
\begin{align}
L_t/N_t \geq (1-p_t)(1-\kappa) - \frac{1}{N_t}.
\end{align}
Substituting $\kappa$ in, we get:
\begin{align}
\min\{L_t / N_t, R_t/N_t\}  \geq  (1-p_t)\frac{1}{\exp(\frac{1-\alpha}{\alpha}) + 1} - \frac{1}{N_t}.
\end{align}
\end{proof}

\subsection{Depth of $K$-partitions}
\label{proof:depth}
Next we prove a depth bound given $K$-partitions.  
\begin{lemma}
A tree on $T$ points with a $K$-partition at every internal node has depth at most $K \log T$.
  \end{lemma}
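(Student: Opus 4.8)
The plan is to bound the depth by walking down a single longest root-to-leaf path and tracking how quickly the number of points below the current node shrinks. Suppose this path visits internal nodes $v_0$ (the root), $v_1,\dots,v_{d-1}$ and ends at a leaf $v_d$, so the tree has depth $d$; let $n_i$ be the number of points stored in the subtree rooted at $v_i$. Two facts pin down the endpoints: $n_0 = T$, since every memory lives in the tree, and $n_d \ge 1$, since the data structure never leaves an empty leaf in place (empty leaves are spliced out by \algorithmname{Remove}).

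The core step is to invoke the $K$-partition hypothesis at each internal $v_i$, $0 \le i \le d-1$. The number of points below $v_i$ is the sum of the numbers below its two children, and the sibling of $v_{i+1}$ holds at least a $1/K$ fraction of $n_i$; hence $n_{i+1} \le n_i - n_i/K = \frac{K-1}{K}\, n_i$. Iterating from $i=0$ to $i=d-1$ gives $n_d \le \bigl(\tfrac{K-1}{K}\bigr)^d T$, and combining with $n_d \ge 1$ yields $\bigl(\tfrac{K}{K-1}\bigr)^d \le T$, that is, $d \le \ln T \,/\, \ln\!\bigl(\tfrac{K}{K-1}\bigr)$.

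Finally I would clean up the denominator with the elementary inequality $\ln(1+t) \le t$ evaluated at $t = -1/K$, which gives $\ln\!\bigl(\tfrac{K}{K-1}\bigr) = -\ln\!\bigl(1-\tfrac1K\bigr) \ge \tfrac1K$. Therefore $d \le K\ln T \le K\log T$ (the last step holds whether $\log$ is the natural log used in this section or $\log_2$, since $\log_2 T \ge \ln T$ for $T\ge 1$), which is the claimed bound.

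I do not expect a real obstacle: the argument is a one-line recursion followed by a standard logarithm estimate. The only points deserving care are the two boundary facts $n_0 = T$ and $n_d \ge 1$ — the latter relying on the pruning invariant for empty leaves — and orienting the inequality $\ln(1+t) \le t$ correctly.
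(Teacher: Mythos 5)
Your proof is correct and is essentially identical to the paper's: both track the at-most-$\bigl(1-\tfrac1K\bigr)$ shrinkage factor per level down the deepest path, conclude $T\bigl(1-\tfrac1K\bigr)^d \geq 1$, and finish with the same elementary bound $-\log(1-x)\geq x$. The only difference is presentational — you make the endpoint facts $n_0=T$ and $n_d\geq 1$ and the base-of-logarithm issue explicit, which the paper glosses over.
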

\begin{proof}
  By assumption, each internal node routes at least a $1/K$ fraction of
  incident points in either direction, hence at most a $1-1/K$ fraction of
  points are routed the other direction.  As a consequence, at a depth
  $d$ a node has at most $t (1-1/K)^d$ memories beneath it.  The
  deepest internal node in the tree satisfies:
  $$ T (1-1/K)^d \geq 1$$
  rearranging, we get: 
  $$ T \geq  \left(\frac{1}{1-1/K}\right)^d $$
  Taking the log of both sides, we get: 
  $$ \log T \geq d \log \left(\frac{1}{1-1/K}\right) $$
  which implies
  $$d \leq \frac{\log T}{\log \left(\frac{1}{1-1/K}\right)}. $$
  Using $-\log(1-x) \geq x$ for $0\leq x<1$, we get
  $$d \leq K \log T.$$
  \end{proof}

\subsection{Computational bound proof}
\label{proof:computation}
Now we prove Theorem~\ref{thm:computation}.

\begin{proof}
  We assume that $d$ is constant.
  From the depth bound, \algorithmname{Remove} is $O(K\log T)$.
  \algorithmname{InsertLeaf} is $O(1)$ if the guard on line 2 is false.  If the guard is
  true, then we know that $|v.m| > c \log T$ and $|v.m| -1 \leq c \log T$
  since otherwise it would have been triggered on a previous
  insertion.  Hence, line 5 executes $O(c \log T)$ times, with each invocation of \algorithmname{Insert} taking $O(1)$ time in this case 
as the while loop in line 1 is executed only once.

  \algorithmname{Insert}$(\cdot,\cdot,0)$ takes $O((K+c)\log T)$ from the depth bound and the complexity of \algorithmname{InsertLeaf}.
  Thus the computational complexity of \algorithmname{Reroute} is $O((K + c) \log T)$. \algorithmname{Update} takes $O(1)$ time, followed by $d$ invocations of \algorithmname{Reroute}, making it $O((K + c) \log T)$ time as well.
   \algorithmname{Insert}$(\cdot,\cdot,d)$ takes $O((K+c)\log T)$, followed by $d$ invocations of \algorithmname{Reroute}, making its total complexity $O((K+c)\log T)$.

  \algorithmname{Query} calls \algorithmname{Path} at most twice and then pays $O(c \log T)$ computation to find the top $k$ memories for the query.  The complexity of \algorithmname{Path} is $O(K\log T)$, making the overall complexity of \algorithmname{Query}  $O((K+c)\log T)$.  
\end{proof}

\subsection{Self-Consistency}
\label{sec:self}
Let us recall the definition of self-consistency.
\begin{definition} 
A {\normalfont \DCMT} is {\bf self-consistent} if for all $z$ with a unique $z.x$, $z=$\algorithmname{Query}$(z.x,1,0)$.
\end{definition}
It is easy to see that self-consistency holds for any $z$ immediately after insertion.

\begin{lemma}If $z = \arg \max_{z'}f(z.x,z')$, then $z=$\algorithmname{Query}$(z.x,1,0)$ immediately after \algorithmname{Insert}$(\Root, z, 0)$.
\end{lemma}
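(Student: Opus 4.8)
The statement says: if $z$ maximizes $f(z.x, z')$ over all $z'$ that we might possibly return, then immediately after \algorithmname{Insert}$(\Root, z, 0)$, calling \algorithmname{Query}$(z.x, 1, 0)$ returns $z$. Since \algorithmname{Query} is called with $\epsilon = 0$, the exploration branch never fires, so \algorithmname{Query}$(z.x,1,0)$ deterministically runs \algorithmname{Path}$(z.x, \Root)$ and then returns $\Top_1(\Path.\leaf, z.x)$, i.e., the single memory in $\Path.\leaf.\mem$ maximizing $f(z.x, \cdot)$. So the claim reduces to two things: (i) the leaf reached by \algorithmname{Path}$(z.x, \Root)$ right after the insert is exactly the leaf into which $z$ was placed, and (ii) among the memories stored in that leaf, $z$ attains the maximum $f(z.x, \cdot)$ value (with ties broken in favor of $z$, or with the hypothesis interpreted so that $z$ is the unique argmax among leaf contents).

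\medskip

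\noindent The plan is to first establish (i), the routing consistency. In \algorithmname{Insert}, the memory $z$ descends from the root: at each internal node $v$ it computes $y \gets \sign((1-\alpha) v.g(z.x) + \alpha B)$, performs $v.g.\update(z.x, y)$, and \emph{then} moves to $v.\Right$ if $v.g(z.x) > 0$ and $v.\Left$ otherwise. Meanwhile \algorithmname{Path} at node $v$ goes $\Right$ iff $v.g(z.x) > 0$. So I need that the post-update router values $v.g(z.x)$ used for descent in \algorithmname{Insert} agree with the router values \algorithmname{Path} sees afterward. The only subtlety is the leaf-split inside \algorithmname{InsertLeaf}: if adding $z$ overflows the leaf, that leaf is promoted to an internal node $v'$ with two fresh children, and all memories (including $z$) are re-inserted via \algorithmname{Insert}$(v', \cdot, 0)$ in some loop order. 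I would argue that regardless of the loop order, after the whole \algorithmname{InsertLeaf} finishes, $z$ resides in some leaf $\ell^\ast$, and the routers on the root-to-$\ell^\ast$ path each have $z.x$ routed consistently because each router $g$ along the way was last updated (on input $z.x$) with a label $y$ and then satisfies $\sign(g(z.x)) = y$? That is the delicate point — it is not automatic that a single online update forces the router to classify that example correctly afterward. However, note the theorem's framing mirrors the "progressive training error" setup: a mistake is defined as $g(z.x)$ disagreeing with $y$ \emph{after} the update, and here the router-update in \algorithmname{Insert} is followed by using $v.g(z.x)$ for the descent decision — so \algorithmname{Insert} itself branches on the post-update value. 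Thus whatever direction $z$ actually went in \algorithmname{Insert} is precisely $\Right$ iff $v.g(z.x) > 0$ post-update, which is exactly \algorithmname{Path}'s rule, and no router on that path is touched again before the \algorithmname{Query} call (the final \algorithmname{Reroute} is invoked with $d=0$, so it does nothing). Hence \algorithmname{Path}$(z.x,\Root)$ retraces exactly the descent \algorithmname{Insert} took and lands in $\ell^\ast$.

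\medskip

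\noindent Then (ii) is immediate from the hypothesis: $z \in \ell^\ast.\mem$ and $z = \arg\max_{z'} f(z.x, z')$ over all memories (in particular over $\ell^\ast.\mem$), so $\Top_1(\ell^\ast, z.x) = \{z\}$, and \algorithmname{Query} returns $z$. I would phrase the writeup so that the main lemma-within-the-proof is the routing-retrace claim, handling the split case by induction on the recursion depth of \algorithmname{InsertLeaf}/\algorithmname{Insert}: after any invocation of \algorithmname{Insert}$(v, z, 0)$ (possibly triggering nested splits), $z$ sits in a leaf below $v$ reachable by \algorithmname{Path}$(z.x, v)$.

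\medskip

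\noindent The main obstacle is exactly the leaf-split recursion: I need to check that when the overflowing leaf is promoted and its memories re-inserted in loop order, the intermediate router updates from the \emph{other} memories $z' \neq z$ don't later get overwritten by $z$'s own update in a way that breaks $z$'s route — and more importantly that the router state at the end of the loop is the one \algorithmname{Path} will query. Since within \algorithmname{InsertLeaf} each inner \algorithmname{Insert}$(v', z', 0)$ descends only one level (the children of $v'$ are leaves, and \algorithmname{InsertLeaf} on them triggers a further split only if they overflow, which recursion handles), the route of $z$ through $v'$ is fixed by $v'.g(z.x)$ as of the moment $z$ is processed, but $v'.g$ may be updated again by later $z'$s in the loop. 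The clean fix is to process the argument in terms of the \emph{final} state: after the loop, $v'.g$ has some value; \algorithmname{Path} uses that value; and I must show $z$ is in the child consistent with it. This is true if $z$'s update is the last one applied to $v'.g$, or if we assume (as the progressive-error machinery implicitly does, and as is standard for these tree reductions) that we simply track which leaf $z$ ended up in and that \algorithmname{Path} follows router signs — the honest statement is that self-consistency "immediately after insertion" holds when no further router drift has occurred, which is the case here since $d=0$. I would therefore state the routing claim carefully, possibly noting that the split loop should insert $z$ last (or that the claim is about the configuration reached by following current router signs), and then the rest is a one-line application of the $\Top_1$ definition.
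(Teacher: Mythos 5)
Your proposal is correct and follows essentially the same route as the paper's proof: both arguments rest on the observation that \algorithmname{Insert} branches on the \emph{post-update} value of $v.g(z.x)$, which is exactly the rule \algorithmname{Path} applies (and deeper updates never touch routers closer to the root), reduce the only subtlety to the leaf-split loop in \algorithmname{InsertLeaf}, and resolve it the same way the paper does --- by having $z$ be the last memory re-inserted at the newly created internal node --- before closing with the $\arg\max$ hypothesis. The paper simply asserts that ``the last execution of line 5 is for $z$'' where you flag it as an implementation convention to be imposed, but the substance of the argument is identical.
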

\begin{proof}
  By construction, the updates in line 4 of \algorithmname{Insert} do
  not affect the routers at nodes closer to the root.
  Therefore, since \algorithmname{Insert} line 6 and
  \algorithmname{Path} line 3 are identical, both
  \algorithmname{Insert} and \algorithmname{Query} walk through the
  same internal nodes.  At \algorithmname{InsertLeaf},
  the last execution of line 5 is for $z$ and hence any newly created
  internal node also routes in the same direction.  Once a leaf is
  reached, $z = \arg \max_{z'}f(z.x,z')$ implies the claim
  follows.
\end{proof}

Achieving self-consistency for all $z$ simultaneously is more
difficult since online updates to routers can invalidate pre-existing
self-consistency.  Nevertheless, the combination of the \algorithmname{Reroute}
operation and the convergence of learning algorithms at internal nodes
leads to asymptotic self-consistency.

\begin{definition}A {\bf convergent learning algorithm} satisfies,
for all input distributions $D$ and all update sequences,
\[
\mathbf{P}_{x\sim D}[ g_{t}(x) \neq g_{t-1}(x)] = 0,
\]
in the limit as $t\rightarrow \infty$.
  \end{definition}
Restated, a convergent learning algorithm is one that disturbs fewer
predictions the more updates that it gets.  This property is an
abstraction of many existing update rules with decaying learning
rates.  
\begin{theorem}\label{thm:self} For all contextual memory trees $T$, if $z = \arg \max_{z'}T.f(z.x,z')$ for all $z$, and all routers $g$ are convergent under the induced sequence of updates, then in the limit as ${T}.d\rightarrow \infty$, $T$ is self-consistent almost surely.
\end{theorem}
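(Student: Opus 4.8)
The plan is to show that the ``bad'' event — some memory $z$ is not returned by $\algorithmname{Query}(z.x,1,0)$ — has probability tending to $0$, by reducing it to a finite union of router-disagreement events and invoking the convergence hypothesis on each. First I would set up the bookkeeping: let $N$ be the (fixed, since we are considering a fixed tree $T$ at a moment where no further \algorithmname{Insert}s occur, only \algorithmname{Reroute}s driven by $T.d\to\infty$) number of memories, and fix one memory $z=(x,\omega)$. By the lemma already proved, immediately after the most recent \algorithmname{Reroute} that re-inserted $z$, the tree is self-consistent \emph{for $z$}: \algorithmname{Insert} and \algorithmname{Query} walk the same root-to-leaf path and, since $z=\arg\max_{z'}f(z.x,z')$, the $\Top_1$ at that leaf is $z$. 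The only way self-consistency for $z$ can later be violated is if some router $g$ on that path (or on the path \algorithmname{Query} would now take) changes its decision on $z.x$, or if the leaf structure changes so that $z$ ends up in a different leaf than the one $\algorithmname{Query}(z.x)$ reaches. Both kinds of change are caused by router updates, i.e.\ by \algorithmname{Insert} calls inside subsequent \algorithmname{Reroute}s.

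The key step is a counting argument: between two consecutive \algorithmname{Reroute}s that touch $z$ itself, each of the other $N-1$ memories is re-inserted in expectation a bounded number of times (each \algorithmname{Reroute} picks $z'\in_U M$, so the gap until $z$ is picked again is geometric with mean $N$), and each such re-insertion issues at most $O(\text{depth})=O(K\log N)$ router updates along a single path. So over any window of $\algorithmname{Reroute}$s, the total number of router updates that could disturb $z$ is controlled. Now invoke the \emph{convergent learning algorithm} hypothesis: for each of the finitely many internal nodes $v$ (at most $O(N)$ of them), $\mathbf{P}_{x'\sim D}[g_{v,t}(x')\neq g_{v,t-1}(x')]\to 0$ as the number of updates $t$ at $v$ grows. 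Since $T.d\to\infty$ drives the number of \algorithmname{Reroute}s — hence the number of updates at every node that keeps receiving updates — to infinity, each router's per-update disturbance probability on the induced update distribution vanishes. Summing the vanishing disturbance probabilities over the $O(K\log N)$ nodes on $z$'s path and over the bounded number of disturbing updates in a window before $z$ is re-inserted again, and then union-bounding over the $N$ memories, gives that the probability that \emph{any} $z$ fails self-consistency goes to $0$, i.e.\ $T$ is self-consistent almost surely in the limit.

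A couple of points need care. The convergence definition is stated for a fixed input distribution $D$ and ``all update sequences''; here the update sequence at a node is itself random (it depends on which memories get rerouted and where earlier routers sent them), so I would either read the hypothesis as a worst-case-over-sequences statement (which the ``all update sequences'' phrasing licenses) or note that the relevant $D$ is the empirical distribution over the $N$ fixed query points and condition on the realized reroute order. Also, ``$z$ reaches a different leaf'' must be tied back to router disturbances: a leaf split/merge only happens when \algorithmname{InsertLeaf} triggers, but for a fixed memory set of size $N$ the leaf partition is determined by the current routers, so a change in which leaf $z.x$ lands in is again witnessed by some router on the path flipping its decision on $z.x$ — this lets me fold the structural changes into the same union bound rather than handling them separately. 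The main obstacle is making the window/counting argument rigorous enough that ``bounded number of disturbing updates times vanishing per-update disturbance probability $\to 0$'' is actually valid — i.e.\ exchanging the limit with the (random, a priori unbounded) number of intervening reroutes — which I would handle by a Borel–Cantelli style argument: show $\sum$ over reroute-epochs of $\mathbf{P}[z \text{ disturbed in epoch } m \text{ and not yet re-inserted}]$ is finite, or more simply observe that once every router has converged to the point where its per-update disturbance probability is below $\delta/(N\cdot K\log N)$, the steady-state probability of any self-consistency violation is below $\delta$, and $\delta$ is arbitrary.
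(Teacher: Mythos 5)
Your proposal is correct in substance but takes a genuinely different route from the paper. The paper's proof is a short level-wise induction: the uniform \algorithmname{Reroute} operation plus convergence of the root's learner implies the root eventually routes self-consistently almost surely; once the root is stable, the distribution of memories induced at each child is stable, so the same argument applies recursively down the tree, and the $\arg\max$ assumption on $f$ closes the argument at the leaves. You instead fix the memory set, anchor on the lemma that $z$ is self-consistent immediately after its own re-insertion, and bound the probability that any of the finitely many router-flip events on $z$'s path occurs before $z$ is next rerouted, via a counting argument (geometric waiting time of mean $N$ times $O(K\log N)$ updates per intervening re-insertion) combined with the vanishing per-update disturbance probability, then union-bound over the $N$ memories. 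What the paper's decomposition buys is a clean handling of the non-stationarity of the ``induced sequence of updates'' at interior nodes: a child's input distribution only becomes well-defined after its ancestors have converged, which is exactly why the argument proceeds top-down. Your argument has to absorb that issue into a worst-case (``all update sequences'') or empirical-distribution reading of the convergence definition, which you correctly flag; with that reading your quantitative argument is, if anything, more explicit than the paper's about \emph{why} the limit holds, and your observation that leaf splits/merges must be folded into router flips addresses a structural change the paper's sketch does not mention at all. Both arguments share the same two load-bearing hypotheses ($z=\arg\max_{z'}f(z.x,z')$ and router convergence), so neither is more general; yours is longer but more nearly a complete proof.
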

\begin{proof}
The proof operates level-wise.  The 
uniform \algorithmname{Reroute} operation and the fact that the learning algorithm at the root is convergent by assumption guarantees that the root eventually routes in a self-consistent fashion almost surely.  Once the root converges, the same logic applies recursively to every internal node, for the distribution of memories induced at the node.
To finish the proof, we just use the assumption that $z = \arg \max_{z'}f(z.x,z')$.
\end{proof}

Asymptotic
self-consistency is a relatively weak property so we also study
self-consistency empirically in section~\ref{sec:ablation}.

\subsection{Learning}
\label{sec:learning}
Finding a good partition from a learning perspective is plausibly more
difficult than finding a good classifier.  For example, in a vector
space finding a partition with a large margin which separates input
points into two sets each within a constant factor of the original in
size is an obvious proxy.  The best results for this
problem~\cite{Unsuplarge,hyperplane} do not scale to large datasets or
function in an online fashion.

For any given node we have a set of incident samples which cause
updates on \algorithmname{Insert} or \algorithmname{Update}.  Focusing on \algorithmname{Update} at a single node, the
natural function to optimize is a form of balanced expected reward.
If $r_a$ and $p_a$ are the rewards and probabilities of taking action
$a$, then a natural objective is:
\begin{align}\label{eq:obj}
\arg \max_g E_{x\sim D}(1-\alpha)r_{g(x)} - \alpha \log p^g_{g(x)}
\end{align}
where $p^g_{a} = \Pr_{x \sim D}(g(x)=a)$ is the probability that $g$
chooses direction $a$ as induced by samples over $x$.  This objective
both maximizes reward and minimizes the frequency of the chosen
action, implying a good solution sends samples in both directions.

The performance of the partitioner is dependent on the classifier $g$
which optimizes importance weighted binary classification.  In
particular, we evaluate the performance of $g$ according to:
\[ \hat{E}_{x,y,i} i I(g(x) \neq y)\]
with the goal of $g$ minimizing the empirical importance weighted loss
over observed samples.

Next we prove a basic sanity check theorem about the asymptotics of
learning a single node.  For this theorem, we rely upon the notion of
a no-regret~\cite{noregret} $g$ which is also convergent.  Common
no-regret algorithms like Hedge~\cite{Hedge} are also convergent for
absolutely continuous $D$ generating events.  The following theorem
relies on the 
\begin{theorem}
  For all absolutely continuous distributions $D$ over updates with $d=0$ reroutes and for all compact convergent no-regret $g$: $$\lim_{t\rightarrow \infty} g_t$$ exists and is a local maxima of \eqref{eq:obj}.
\end{theorem}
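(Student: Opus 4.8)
The plan is to show that the convergent, no-regret dynamics at a single node settle at a limit router $g_\infty$ that is a best response to the stationary stream of importance-weighted binary examples it eventually generates, and then to observe that this stream is engineered precisely so that lowering its weighted classification loss is, to first order, the same as raising the balanced-reward objective \eqref{eq:obj}; hence a loss-minimizing best response is an objective-maximizing stationary point.

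\textbf{The limit exists.} With $d=0$ there are no \algorithmname{Reroute}s, so the updates arriving at the node are governed by a fixed input law $D$; compactness of the parameter space gives a limit point $\theta_\infty$ of the iterates $\{\theta_t\}$, and the convergence hypothesis $\mathbf{P}_{x\sim D}[g_t(x)\neq g_{t-1}(x)]\to 0$ together with absolute continuity of $D$ upgrades this to convergence of the prediction map, $\mathbf{P}_{x\sim D}[g_t(x)\neq g_\infty(x)]\to 0$ with $g_\infty=g_{\theta_\infty}$. Since the routing decisions stabilize, a law-of-large-numbers argument applied to the accumulated counts gives $v.\Left.n/v.n\to p^{g_\infty}_{\Left}$ and $v.\Right.n/v.n\to p^{g_\infty}_{\Right}$, so the balance term $\log v.\Left.n-\log v.\Right.n$ in line 5 of \algorithmname{Update} converges to $\log p^{g_\infty}_{\Left}-\log p^{g_\infty}_{\Right}$.

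\textbf{Stationary stream and best response.} The triple $(v_i,a',1/2)$ recorded by \algorithmname{Query} is exactly what makes $\hat r=\frac{r}{p}\bigl(\mathbf{1}(a=\Right)-\mathbf{1}(a=\Left)\bigr)$ an unbiased estimate of $r_{\Right}(x)-r_{\Left}(x)$, the difference in expected downstream reward between routing $x$ right versus left. Combining this with the previous step, \algorithmname{Update} asymptotically presents $g$ with the importance-weighted binary stream in which, writing $\Delta(x):=(1-\alpha)\bigl(r_{\Right}(x)-r_{\Left}(x)\bigr)+\alpha\bigl(\log p^{g_\infty}_{\Left}-\log p^{g_\infty}_{\Right}\bigr)$, the example $x$ carries label $y_\infty(x)=\sign(\Delta(x))$ and weight $i_\infty(x)=|\Delta(x)|$. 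No-regret on importance-weighted binary classification then makes $g_\infty$ a best response to this stationary stream: for every parameter $\theta$ near $\theta_\infty$ the weighted $0/1$ loss $L(g):=E_{x\sim D}\bigl[i_\infty(x)\,I(g(x)\neq y_\infty(x))\bigr]$ obeys $L(g_\theta)\ge L(g_\infty)$. Because such a $\theta$ flips the decision only on a set $S_\theta$ with $D(S_\theta)\to 0$ (absolute continuity again), this reads $L(g_\theta)-L(g_\infty)=\int_{S_\theta}\sigma(x)\,i_\infty(x)\,dD(x)\ge 0$, with $\sigma(x)=+1$ if $g_\infty(x)=y_\infty(x)$ and $-1$ otherwise.

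\textbf{Mass transport identifies $J$ with $-L$.} Write $J(g)=E_{x\sim D}\bigl[(1-\alpha)r_{g(x)}-\alpha\log p^g_{g(x)}\bigr]$ and code $\Right=+1,\Left=-1$. Flipping the decisions on $S_\theta$ changes the reward part of $J$ by $-\int_{S_\theta}g_\infty(x)(1-\alpha)\bigl(r_{\Right}(x)-r_{\Left}(x)\bigr)\,dD$ and transports a net action-mass $\delta=\int_{S_\theta}g_\infty(x)\,dD$ between the two sides; since the derivative of $p_{\Left}\log p_{\Left}+p_{\Right}\log p_{\Right}$ under such transport is $\log p_{\Left}-\log p_{\Right}$, it changes the balance part of $J$ by $-\alpha\bigl(\log p^{g_\infty}_{\Left}-\log p^{g_\infty}_{\Right}\bigr)\delta+o(D(S_\theta))$. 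Adding, $J(g_\theta)-J(g_\infty)=-\int_{S_\theta}g_\infty(x)\Delta(x)\,dD+o(D(S_\theta))=-\int_{S_\theta}\sigma(x)\,i_\infty(x)\,dD+o(D(S_\theta))$, using $g_\infty(x)\Delta(x)=g_\infty(x)y_\infty(x)|\Delta(x)|=\sigma(x)i_\infty(x)$. Therefore $J(g_\theta)-J(g_\infty)=-\bigl(L(g_\theta)-L(g_\infty)\bigr)+o(D(S_\theta))\le o(D(S_\theta))$ by the best-response inequality, so $g_\infty$ is a (first-order) local maximum of \eqref{eq:obj}.

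\textbf{Main obstacle.} The delicate part is the first two steps together: making rigorous that the convergent no-regret iterates genuinely settle at a self-consistent pair $(g_\infty,p^{g_\infty})$---the labels $y_\infty$ depend on $g_\infty$ both through its induced marginals and through the running counts $v.\Left.n,v.\Right.n$, which aggregate the entire, initially non-stationary, update history---rather than drifting or cycling, and that those counts still obey the law of large numbers needed to pin the limiting balance term. Granting that, the $\Path$-based unbiasedness bookkeeping of the second step and the mass-transport computation of the third are essentially routine.
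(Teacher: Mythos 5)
Your proposal follows the same overall architecture as the paper's proof: show that the coupled router/marginal pair $(g,p)$ settles at a self-consistent limit, observe that \algorithmname{Update} then feeds the router an importance-weighted binary stream whose label and weight, $\sign(\Delta(x))$ and $|\Delta(x)|$, encode exactly the per-example advantage of routing right under \eqref{eq:obj}, and conclude that a loss-minimizing best response to that stream is a local maximum of the objective. There are two points of genuine divergence. (i) For existence of the limit, the paper applies Brouwer's fixed point theorem to the compact $(g,p)$ dynamics to obtain a fixed point and then asserts that the no-regret property drives the iterates to one, while you argue from compactness plus the convergence hypothesis directly; neither argument excludes slow drift through a continuum of near-fixed points, and you correctly flag the self-consistency of $(g_\infty, p^{g_\infty})$ as the delicate step --- the paper's appeal to Brouwer does not close it either, since existence of a fixed point does not by itself imply the iterates reach one. (ii) Your ``mass transport'' computation is more explicit than the paper's final step, which identifies the limiting expected importance-weighted loss with the objective $\Phi^{(g)}$ outright (an identification that holds only up to a $g$-dependent additive term, with the dependence of $p^g$ on the decision rule left implicit); your first-order perturbation accounts for how the entropy term moves when decisions are flipped on a small-measure set, and so delivers the local-maximum conclusion cleanly and with the signs verifiably consistent with line 5 of \algorithmname{Update}. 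Net assessment: same proof strategy, with your version more careful on the loss-to-objective identification and equally informal (as you yourself acknowledge) on the convergence of the dynamics.
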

The proof is in Appendix~\ref{proof:learning}.  Here, convergent $g$
is as defined in section~\ref{sec:self} and compact $g$ refers to the
standard definition of a compact space for the parameterization of
$g$.  

It's important to note that the $d=0$ requirement is inconsistent with
the $d\rightarrow\infty$ requirement for self-consistency.  This
tradeoff is fundamental: a learning process that is grounded in
unsupervised updates (as for self-consistency) is fundamentally
different from a learning process grounded in rewards (as for the
learning update).  If these two groundings happen to agree then
compatibility exists as every unsupervised update is consistent with a
reward update.

This theorem shows that the optimization process eventually drives to
a local maxima of~\eqref{eq:obj} providing a single node semantics.
Since every node optimizes independently, the joint system therefore
eventually achieves convergence over 1-step routing deviations.

\subsection{Learning proof}
\label{proof:learning}
\begin{proof} 
  Consider without loss of generality the root node of the tree, and
  then apply this argument recursively.

  Since $g$ is no-regret the $g$ minimizing \eqref{eq:obj} for any
  observed $p$ eventually wins.  Since the $D$ producing updates is
  absolutely continuous, convergence of $g$ implies convergence of $p$
  and the $g,p$ system is compact since $g$ is compact and $p$ is
  compact.  Given this, a $g,p$ pair maps to a new $g,p$ pair
  according to the dynamics of the learning algorithm.  

  Brouwer's fixed point theorem~\cite{Brouwer} hence implies that
  there exists a $g,p$ pair which is a fixed point of this process.
  Since $g$ is no-regret, the system must eventually reach such a
  fixed point (there may be many such fixed points in general).

  For a given $g$, let 
  $\Phi^{(g)} = E_{x\sim D}\left[(1-\alpha)r_{g(x)} - \alpha \log p^g_{g(x)}\right]$ 
  be the objective in equation \eqref{eq:obj} and define
  \begin{align*}
    r^{(g)}_\text{Left} & = (1-\alpha)r_\text{Left} - \alpha \log p^g_\text{Left}\\
    r^{(g)}_\text{Right} & = (1-\alpha)r_\text{Right} - \alpha \log p^g_\text{Right}.
  \end{align*}
  Using this definition, we can define:
  \begin{align*}
    y^{(g)} &=  r^{(g)}_\text{Right} - r^{(g)}_\text{Left}\\
    & = (1-\alpha)(r_\text{Right} - r_\text{Left}) - \alpha (\log p^g_\text{Right} + \log p^g_\text{Left})\\
    & = (1-\alpha)(r_\text{Right} - r_\text{Left}) + \alpha \log \frac{p^g_\text{Left}}{p^g_\text{Right}} \\
    & \stackrel{a.s.}{=} (1-\alpha)(r_\text{Right} - r_\text{Left}) + \alpha \lim_{\substack{t \rightarrow \infty \\ S \sim D^t}} \log \frac{\sum_{x \in S} I(g(x)=\text{Left})}{\sum_{x \in S} I(g(x)=\text{Right})}.
  \end{align*}
  Assume wlog that $r^{(g)}_\text{Right} > r^{(g)}_\text{Left}$ such that 
  $|y^{(g)}| = y^{(g)}$.  Examining Line 5 of \algorithmname{Update}, 
  for a fixed $g$ (i.e. g.update() has converged), taking expectations 
  wrt $p$ over $a$, and denoting $H$ as the complete empirical history 
  of the node,
  \begin{align*}
    E_p y&  = (1-\alpha)E_{a \sim \vec{p}} \left(\frac{r_\text{Right}I(a=\text{Right})}{p(\text{Right})} - \frac{r_\text{Left}I(a=\text{Left})}{p(\text{Left})}\right) + E_{x \sim D}\log \frac{\sum_{x \in H} I(g(x)=\text{Left})}{\sum_{x \in H} I(g(x)=\text{Right})}\\
    & \stackrel{t \to \infty}{=} (1-\alpha)(r_\text{Right} - r_\text{Left}) + E_{\substack{x \sim D \\ S \sim D^t}}\log \frac{\sum_{x \in S} I(g(x)=\text{Left})}{\sum_{x \in S} I(g(x)=\text{Right})}.
  \end{align*}
  In other words, $\lim_{t \rightarrow \infty} E_p y \stackrel{a.s.}{=} y^{(g)}$.  
  The expected loss of $g$ then converges to:
  \begin{align*}
    & E \left[ |y^{(g)}| I(g(x) \neq \sign(y^{(g)})) \right] \stackrel{a.s.}{=} \Phi^{(g)}
  \end{align*}
  proving the theorem.
\end{proof}

\section{Experimental Details}
\subsection{Datasets}
\label{sec:datasets}
\begin{table}[h!]
\begin{center}
\resizebox{0.7\textwidth}{!}{ 
 \begin{tabular}{c c c c } 
 \toprule
 dataset & task & classes & examples \\ 
 \midrule
 ALOI & Visual Object Recognition  & 10$^3$ & 10$^5$ \\ 
 WikiPara ($S$-shot) &  Language Modeling  & 10$^4$ &  $S\times$10$^4$  \\
 ImageNet ($S$-shot) & Visual Object Recognition & 2$\times$ 10$^4$ & 2$S\times$10$^4$  \\
 Pascal & Image-Caption Q\&A & / & 10$^3$ \\
 Flickr-8k & Image-Caption Q\&A & / &  8$\times$10$^3$ \\ 
 MS COCO & Image-Caption Q\&A & /  & 8$\times$10$^4$ \\ 
 \bottomrule
 \end{tabular}}
 \end{center}
 \caption{ Datasets used for experimentation on multi-class and Image Retrieval}
\label{tab:dataset_multi_class}
\end{table}

\begin{table}[h!]
\begin{center}
\resizebox{1.\textwidth}{!}{ 
    \rowcolors{2}{gray!25}{white}
 \begin{tabular}{c c c c c c c} 
 \toprule
 dataset & \# Training & \# test & \# Categories & \# Features & Avg \# Points/Label & Avg \# Labels/Point \\ 
 \midrule
 RCV1-2K & 623847  & 155962  &  2456 & 47236 & 1218.56 & 4.79  \\ 
 AmazonCat-13K  & 1186239 & 306782 & 13330 & 203882 & 448.57 & 5.04   \\
 Wiki10-31K & 14146 & 6616 & 30938 & 101938 & 8.52 & 18.64   \\
 \bottomrule
 \end{tabular}}
 \end{center}
 \caption{ Extreme Multi-Label Classification datasets used for experimentation}
\label{tab:dataset_multi_label}
\end{table}

\autoref{tab:dataset_multi_class} summarizes the datasets used in Multi-class classification and image retrieval experimentations. 
ALOI \cite{geusebroek2005amsterdam} is a color image collection of one-thousand small objects. We use the same train and test split and feature representation as Recall Tree \cite{daume2016logarithmic}. The few-shot ImageNet datasets are constructed from the whole ImageNet that has 20,000 classes and $10^7$ training examples. We use the same train and test split as Recall Tree \cite{daume2016logarithmic}. The features of images are extracted from intermediate layers of a convolutional neural network trained on the ILVSRC2012 \cite{oquab2014learning}. To construct a $S$-shot ImageNet dataset, we randomly sample $S$ training examples for each class. A $S$-shot ImageNet dataset hence has a $20000\times S$ many training examples. 

Pascal sentence dataset consists of 1000 pairs of image $I\in\mathbb{R}^{300\times 180}$ and the corresponding description of the image. We compute HoG feature $y$ for each image $I$ and token occurrences $y\in\mathbb{R}^{2^{20}}$ for each description using Scikit-learn's Hashing functionality. The resulting feature $x$ is high dimensional but extremely sparse. We randomly split the dataset into a training set consisting of 900 pairs of images and their descriptions and a test set with the remaining data. A memory $z=(x,y)$ here consists of the image feature $y$ and the descriptions' feature $x$.  During inference time, given a query $x$ (i.e., a description of an unknown image), \DCMT{} retrieves a memory $z'=(x',y')$, such that the image $y'$ associated with the memory $z'$ is as similar to the unknown image associated with the test query $x$.  Given two memories $z$ and $z'$, the reward signal is defined as $r(z_y, z'_y ) = \langle z_y, z'_y \rangle $. The Flickr8k dataset consists of 8k images and 5 sentences descriptions for each image. Similar to Pascal, we compute HoG feature $y$ for each image and hashing feature $x$  for its 5-sentence description. The MS COCO image caption dataset consists of 80K images in training set, 4000 images in validation set and testing set. We extract image feature $y$ from a fully connected layer in a VGG-19 \cite{simonyan2014very} pre-trained on ILSVRC2012 dataset . We use hashing feature 
$x$ for image captions. 

\autoref{tab:dataset_multi_label} summarizes the datasets used for multi-label classification task. All three datasets are obtained from the Extreme Classification Repository (\url{http://manikvarma.org/downloads/XC/XMLRepository.html}). 

All datasets that we used throughout this work are available at (url will be provided here).

\begin{table*}[t]
\begin{center}
\resizebox{0.8\textwidth}{!}{ 
  \centering
    \rowcolors{2}{gray!25}{white}
    \begin{tabular}{llllllllllr}
       \toprule
      &  \# unsupervised passes  &  \# supervised passes &  c  & d & $\alpha$  \\ 
     \midrule
    {ALOI} & 1  &  2  & 4 & 5 & 0.1   \\
   Few-shot WikiPara &  1  &   1  & 4  & 5  & 0.9   \\
   {Few-shot ImageNet} & 1 &  1 &  4 & 3 & 0.9  \\
   RCV1-1K & 1 & 3 & 2 & 3 & 0.9 \\
   AmazonCat-13K & 1 & 3 & 2 & 3 & 0.9 \\
   Wiki10-31K & 1 & 3 & 2 & 3 & 0.9  \\
   Pascal & 1 & 1 & 10 & 1 & 0.9  \\
   Flickr & 1 & 1 & 10 & 1 & 0.9 \\
   MS COCO & 1 & 1 & 10 & 1 & 0.9  \\
    \bottomrule
    \end{tabular}}
\end{center}
\vspace{-5pt}
\caption{Key parameters used for \DCMT{} for our experiments}
\label{tab:parameters}
\end{table*}

\subsection{Extra Plots in Sec.~\ref{sec:ablation}}
\label{sec:missing_plots}
\begin{figure}[t!]
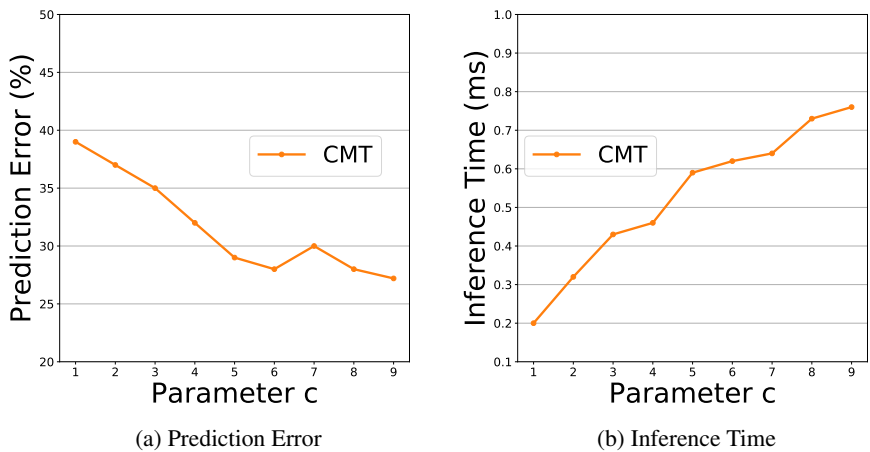

	\centering
	\begin{subfigure}[l]{0.35\textwidth}
        \includegraphics[width=1.\textwidth,keepaspectratio]{./figures/alo_few_shots_effect_c_performence_CMT}
        \caption{Prediction Error}
        \label{fig:aloi_few_shot_c_perf}
    \end{subfigure}
    \begin{subfigure}[l]{0.35\textwidth}
        \includegraphics[width=1.\textwidth,keepaspectratio]{./figures/aloi_few_shots_effect_c_inference_time_CMT}
        \caption{Inference Time}
        \label{fig:aloi_few_shot_c_time}
    \end{subfigure}
    \caption{Performance and inference time of CMT versus the number of examples per leaf}
\label{fig:aloi_few_shots_effect_c}
\end{figure}

\begin{figure}[t!]
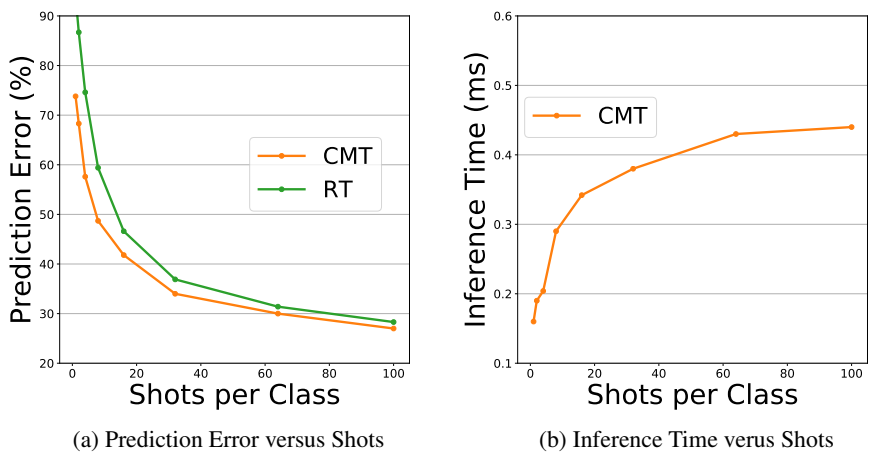

	\centering
	\begin{subfigure}[l]{0.35\textwidth}
        \includegraphics[width=1\textwidth,keepaspectratio]{./figures/alo_few_shots_performence_with_RT_CMT}
        \caption{Prediction Error versus Shots}
        \label{fig:aloi_few_shot_perf}
    \end{subfigure}
    \begin{subfigure}[l]{0.35\textwidth}
        \includegraphics[width=1\textwidth,keepaspectratio]{./figures/aloi_few_shots_inference_time_CMT.pdf}
        \caption{Inference Time verus Shots}
        \label{fig:aloi_few_shot_time}
    \end{subfigure}
    \caption{Performance (a) and inference time (b) of \DCMT{} versus the number of training examples per label in ALOI (i.e., shots $S$).}
\label{fig:aloi_few_shots_details_per_times}
\end{figure}

\autoref{fig:aloi_few_shots_effect_c} shows the detailed plots of \DCMT's statistical performance (a) and inference performance (b) with respect to parameter $c$ (i.e., the maximum number of memories per leaf: $c\log(N)$). As shown in \autoref{fig:aloi_few_shots_effect_c} (b), the inference time increases almost linear with respect to $c$, which is expected as once we reach a leaf, we need to scan all memories stored in that leaf. 

\autoref{fig:aloi_few_shots_details_per_times} shows the detailed plots of \DCMT's statistical performance (a) and inference time (b) with respect to the number of shots (i.e., number of training examples for each class) in ALOI.  Note that ALOI has in total 1000 classes and hence for ALOI $S$-shot, we will have in total $S\times 1000$ examples. Namely as $S$ increases, \DCMT{} has more memories to store. We vary $S$ from 1 to 100. \autoref{fig:aloi_few_shot_perf} shows the performance of \DCMT{} improves quickly as $S$ increases (e.g., dataset becomes easier to learn). Also \DCMT{} consistently outperform Recall Tree, with larger margin at fewer shots. 
From \autoref{fig:aloi_few_shots_details_per_times} (b), we see that the inference time increases sublinearly with respect to the number of shots (i.e., the number of total memories stored in \DCMT), which is also expected, as we show that the depth of \DCMT{} and the number of memories per leaf are logarithmic with respect to the size of \DCMT.

\subsection{Few-shot Extreme Multi-class Classification Details}
\label{sec:few_shot_details}
\begin{table*}[h]
\begin{center}
\resizebox{0.8\textwidth}{!}{ 
    \centering
    \begin{tabular}{llccccccccc}
       \toprule
     & & { \DCMT(u)}   & {\DCMT}   & LOMTree & Recall Tree & OAA &  &   \\ 
      \midrule
    \multirow{ 2}{*}{ALOI} & Test Error  & 75.8  &  {26.3}  & 66.7 & 28.8  & 21.7 &  & \\
    & Test Time & 0.27  & 0.15  & 0.01 & 0.02 & 0.05 &   & \\
    \midrule
    \multirow{ 2}{*}{WikiPara (1-shot)} & Test Error  & 97.3  &  {96.7}  & 98.2 & 97.1  & 98.2 &  & \\
    & Test Time & 0.3  & 0.3  & 0.1 & 0.1 & 0.9 &   & \\
      \rowcolor{black!10}
       & Test Error  & 96.3  &  96.0  & 96.7 & {94.0}  & 95.6 &  &  \\
      \rowcolor{black!10}
    \multirow{-2}{*}{WikiPara (2-shot)}& Test Time & 0.4  & 0.4  & 0.1 & 0.1 & 1.1 &  & \\
       \multirow{ 2}{*}{WikiPara (3-shot)} & Test Error  & 96.1  &  95.7  & 96.1 & {92.0}  & 92.8 &  &  \\
    & Test Time & 0.5  & 0.3  & 0.1 & 0.1 & 1.1 &  & \\
    \midrule
    \multirow{ 2}{*}{ImageNet (1-shot)} & Test Error & 98.8 & 98.7 & 99.8 &  99.7 & 98.0 & &  \\
    & Test Time & 9.6 & 8.2 & 1.0 &  3.3 & 112.4 &  & \\
      \rowcolor{black!10}
     & Test Error & 98.7 & 98.3 & 99.6 &  99.3 & 97.0 &  &  \\
      \rowcolor{black!10}
    \multirow{-2}{*}{ImageNet (2-shot)}& Test Time & 11.7 & 8.6 & 1.2 & 3.3 & 112.0 &  & \\
     \multirow{ 2}{*}{ImageNet (3-shot)} & Test Error & 98.6 & 98.1 & 99.4 &  98.9 & 96.2 &  &  \\
    & Test Time & 9.8 & 8.5 & 4.6 &  3.3 & 109.0  &  & \\
      \rowcolor{black!10}
     & Test Error & 98.4 & 97.9 & 99.2 &  98.6 & 95.3 &  &  \\
      \rowcolor{black!10}
    \multirow{-2}{*}{ImageNet (5-shot)}& Test Time & 12.5 &  11.6  & 1.3 &  4.0 & 110.4 &  & \\
    \bottomrule
    \end{tabular}}
\end{center}
\vspace{-5pt}
\caption{ Prediction error ($\%$) and inference time (ms) of different mult-class classification algorithms on few-shot extreme multi-class classification datasets.}
\label{tab:details_few_shot_table}
\end{table*}

\autoref{tab:details_few_shot_table} shows the detailed prediction error and inference time of \DCMT and other baselines. For ALOI, we briefly tuned the parameters of \DCMT based on a set of holdout training data, and for few-shot WikiPara (and few-shot ImageNet), we briefly tuned the parameters of \DCMT using the one-shot dataset on hold-out dataset and then simply just use the same set of parameters across all other few-shot datasets. The detailed key parameters can be found in \autoref{tab:parameters}. Note that the parameters $c$ (leaf memories multiplier), $d$ (number of \textsc{reroute} calls per insertion), and $\alpha$ (regularization parameter to ensure balance of \DCMT) are the tthree key extra parameters we have compared to the baselines considered here such as Recall Tree and LOMTree. 

One interesting observation from \autoref{tab:details_few_shot_table} is that \DCMT can outperform even OAA at the one-shot WikiPara experiment. All the datasets have same number of examples per class and hence a constant predictor (i.e., prediction by majority) would have prediction accuracy at $1/$(\# of classes). In terms of computation, due to the overhead of storing memory and dynamically allocating memory in \DCMT, \DCMT in general is less computationally efficient than other logarithmic baselines (LOMTree \& Recall Tree). Comparing to highly optimized implementation of OAA in VW, we observe that \DCMT{} is less computationally efficiently on smaller dataset such aas ALOI, while for datasets with extremely large number of labels, \DCMT{} consistently outperform OAA in terms of computation efficiency.


\subsection{Multi-Label Classification}

The key parameters we used to conduct our multi-label experiments are summarized in \autoref{tab:parameters}. We briefly tuned the number of supervised passes and $\alpha$ on holdout training datasets and picked a set of parameters that worked well for all datasets in general. We did not tuned parameters $c$ and $d$.  The results are summarized in table~\ref{tab:multi_label_perf}.

\subsection{Image Retrieval}
\begin{table*}[h]
\begin{center}
\resizebox{0.7\textwidth}{!}{ 
\centering
    \begin{tabular}{llcccccc}
   \toprule
    & & \DCMT (u)   & \DCMT  & NN & KD-Tree w/ PCA     \\ 
    \midrule
     \multirow{ 2}{*}{Pascal } &Test Reward & 0.680$\pm$0.008 & {0.694}$\pm$0.010 & 0.683 $\pm$0.013  &   0.675 $\pm$0.013 \\
    &Test Time (ms) & {0.13} & {0.58} & 0.74 &  0.002 \\
    \midrule
     \multirow{ 2}{*}{Flickr8k} &Test Reward & 0.733$\pm$0.004 & {0.740}$\pm$0.002 & 0.736 $\pm$0.003 & 0.733 $\pm$  0.002 \\
    &Test Time (ms) & {0.23} & {1.0} & 6.0 & 0.002  \\
    \midrule
    \multirow{ 2}{*}{MS COCO} &Test Reward & 0.581 & 0.584 &{0.585}  & 0.574  \\
    &Test Time (ms) & {0.590} &  {1.90} & 12.4 & 35.4  \\

    \bottomrule
    \end{tabular}}
\end{center}
\caption{ Performance (average reward $\%$ and time $ms$) of different approaches on image retrieval tasks.}
\label{tab:image}
\end{table*}

For Pascal and Flickr8k, we randomly split the dataset into a pair of training set and test set. We create 5 random splits, and use one split for tuning parameters for \DCMT. For MS COCO, we use the default training, validation, and test split, and tune parameters on validation set.

For image retrieval applications, the key parameters used by \DCMT{} are summarized in \autoref{tab:parameters}, and the detailed performances of \DCMT and NN are summarized in \autoref{tab:image}.  For Pascal and Flickr8k, since we have 5 training/test split, we report mean and standard deviation.  

In this set of experiments, for \DCMT{}, during training we set $k$ to be $c\log(N)$, i.e., we returned all memories stored in a single leaf to get reward signals to update $f$.  During testing, for both \DCMT{} and NN, we report the average reward of the top returned memory on given test sets. 

\wen{\autoref{tab:image} summarizes the performance of \DCMT{}, NN and KD-Tree operating on a low dimension feature of the query computed from the randomized PCA algorithm from sklearn. We choose the reduced dimension of the feature such that the total PCA time plus the KD-Tree construction time is similar to the time of unsupervised CMT construction time (in Pascal, the reduced dimension is 20; in Flickr8k, the reduced dimension is 200; in MSCOCO, the reduced dimension is 200).  Note that on Pascal and Flickr8k, \DCMT{} slightly outperforms NN in terms of average reward on test sets, indicating the potential benefit of learned memories. \DCMT{} statistically outperforms KD-tree operated on the low dimensional feature computed from PCA.}  


\end{document}